\documentclass{article}

\PassOptionsToPackage{numbers, compress}{natbib}
\PassOptionsToPackage{prologue,dvipsnames}{xcolor}

\usepackage[table]{xcolor}
 \usepackage{xcolor}         

 \usepackage[preprint]{neurips_2026}

\usepackage[utf8]{inputenc} 
\usepackage[T1]{fontenc}    
\usepackage{hyperref}       
\usepackage{url}            
\usepackage{booktabs}       
\usepackage{amsfonts}       
\usepackage{nicefrac}       
\usepackage{microtype}      
\usepackage{amsmath,amssymb,amsthm}
\usepackage{mathtools}
\usepackage{graphicx}
\usepackage{booktabs}
\usepackage{hyperref}
\usepackage{url}
\usepackage{xcolor}
\usepackage{algorithm}
\usepackage{algpseudocode}
\usepackage{thmtools}         
 
\newcommand{\vx}{\mathbf{x}}
\newcommand{\vy}{\mathbf{y}}
\newcommand{\vz}{\mathbf{z}}
\newcommand{\vc}{\mathbf{c}}

\newcommand{\vxi}{\boldsymbol{\xi}}
\newcommand{\Ac}{\mathcal{A}}
\newcommand{\eg}{\emph{e.g.}}
\newcommand{\ie}{\emph{i.e.}}
\newcommand{\vxinit}{\mathbf{x}_\text{init}}
\newcommand{\vzinit}{\mathbf{z}_\text{init}}
\newcommand{\vfield}{\mathbf{v}}
\newcommand{\KV}{\text{KV}}

\theoremstyle{plain}

\theoremstyle{definition}

\theoremstyle{remark}

\definecolor{custompink}{HTML}{E5A4CB}
\definecolor{customblue}{HTML}{85C1E9}

\definecolor{custompinktext}{HTML}{D76FAD}
\definecolor{custombluetext}{HTML}{2790D7}

\title{Accelerating Video Inverse Problem Solvers \\ with Autoregressive Diffusion Models}

\author{
    Taesung Kwon\textsuperscript{\textmd{1*}}
    \qquad Jonghyun Park\textsuperscript{\textmd{1*}}
    \qquad Hyungjin Chung\textsuperscript{\textmd{2\dag}}
    \qquad Jong Chul Ye\textsuperscript{\textmd{1\dag}} \\[8pt]
    \textsuperscript{1} KAIST \qquad \textsuperscript{2} EverEx
}

\begin{document}

\maketitle

\begin{figure}[h]
    \centering
    \vspace{-1.5em}
    \includegraphics[width=\linewidth]{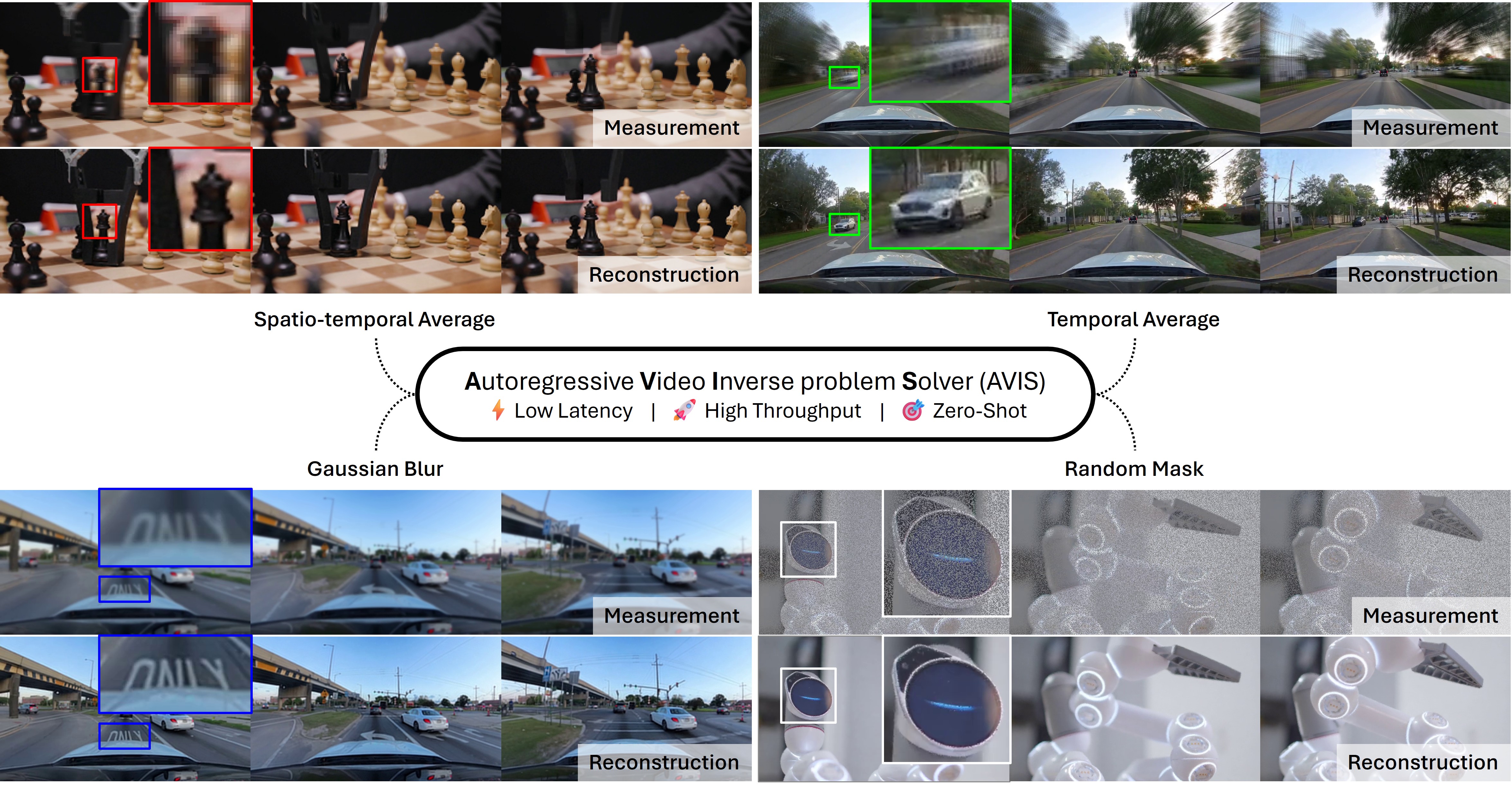}
    \vspace{-1.5em}
    \caption{The AVIS framework leverages autoregressive video diffusion models to restore videos in a streaming manner, naturally eliminating latency bottlenecks. Examples show diverse video inverse problems that AVIS can solve in a zero-shot manner.}
    \label{fig:teaser}
\end{figure}

\let\thefootnote\relax\footnotetext{%
  * First authors. \quad \dag \ Corresponding authors. \hfill
  Project page is available \href{https://avis-project.github.io/}{\textcolor{magenta}{here}}.
}

\begin{abstract}
Diffusion models provide powerful priors for zero-shot video inverse problems, but their real-time deployment is hindered by two inefficiencies: high initial latency caused by holistic video restoration, and low throughput resulting from multiple VAE passes to enforce measurement consistency in pixel space.
To overcome these limitations, we propose Autoregressive Video Inverse problem Solver (AVIS).
The AVIS framework leverages autoregressive video diffusion models to restore videos in a streaming manner, naturally eliminating latency bottlenecks.
Specifically, AVIS initializes reverse diffusion with a measurement-consistent estimate, reducing the required sampling steps.
Compared to leading non-autoregressive solvers, AVIS drastically reduces initial latency from 114s to 4s and increases throughput from 0.71 to 1.18 FPS while achieving superior restoration quality. 
We further introduce a highly accelerated variant, dubbed AVIS \emph{Flash}, that enforces measurement consistency solely on the \emph{first} chunk.
AVIS \emph{Flash} substantially boosts throughput to 5.91 FPS on a single RTX 4090 GPU while maintaining competitive performance and achieving a favorable efficiency–performance trade-off, paving the way toward real-time deployment.
\end{abstract}

\section{Introduction}
\label{sec:intro}
Video inverse problems aim to reconstruct a clean video $\vx$ from a degraded measurement $\vy$ (\eg, low-resolution, masked, or blurred). The degradation process is mathematically formulated as
\begin{equation}
  \vy = \Ac(\vx) + \boldsymbol{n},
\end{equation}
where $\Ac$ is a degradation operator and $\boldsymbol{n}$ denotes measurement noise.
These video restoration tasks are inherently ill-posed, as a single measurement $\vy$ can correspond to many possible clean videos $\vx$.
To resolve this ambiguity, diffusion and flow-based models~\cite{ho2020denoising, song2020score, dhariwal2021diffusion, rombach2022high, ho2022video, podell2024sdxl, liu2023flow, lipman2023flow, esser2024scaling} have recently emerged as powerful priors for solving video inverse problems in a zero-shot manner~\cite{chang2024how, daras2024warped, kwon2025solving, kwon2025vision, kwon2025video, spagnoletti2026lvtino}, producing results that are both perceptually realistic and faithful to the measurements.
Despite their successful restoration performance, these solvers remain far from real-time deployment due to two inefficiencies:
\begin{enumerate}
    \item \textbf{High initial latency:}
    Recent Diffusion model-based Video Inverse problem Solvers (DVIS)~\cite{daras2024warped, kwon2025solving, kwon2025vision, kwon2025video, spagnoletti2026lvtino} restore the entire video in a holistic manner, sampling the entire video sequence simultaneously.
    Consequently, the first frame cannot be displayed until the last frame has been restored, introducing latency equal to the time required to restore all frames.
    \item \textbf{Low throughput:} Modern diffusion and flow-based video models~\cite{kong2024hunyuanvideo, hacohen2024ltx, wan2025wan, agarwal2025cosmos, ali2025world} operate in the latent space of a variational autoencoder (VAE)~\cite{kingma2013auto} for computational efficiency. Recent DVIS based on such models~\cite{kwon2025vision, spagnoletti2026lvtino} require multiple VAE passes during restoration, restricting throughput to below 1 FPS. While some works~\cite{raphaeli2025silo, hong2025inversecrafter} bypass this with auxiliary latent operators, they require degradation-specific training, sacrificing zero-shot generality.
\end{enumerate}

Here, we identify a promising yet underexplored direction toward the efficient deployment of DVIS: harnessing the potential of autoregressive (AR) video diffusion models~\cite{chen2024diffusion, yin2025slow, huang2025self}.
Unlike non-autoregressive solvers that process the entire video jointly (Figure~\ref{fig:intro}(a)), autoregressive solvers restore videos in a streaming manner, naturally eliminating the latency bottleneck.
However, simply adopting an AR backbone does not automatically resolve the issue of low throughput, as it still requires multiple VAE passes for measurement consistency updates.

To overcome this limitation, we propose \textbf{A}utoregressive \textbf{V}ideo \textbf{I}nverse problem \textbf{S}olver (\textbf{AVIS}), which accelerates restoration by initializing the reverse diffusion process with a measurement-consistent estimate, thereby reducing the required sampling steps.
We draw inspiration from CCDF~\cite{chung2022come}, which demonstrates that initializing the reverse diffusion process from a coarse estimate, predicted by an auxiliary pretrained restorer, exploits stochastic contraction to significantly reduce the required sampling steps.
Unlike CCDF, AVIS does not rely on an auxiliary pretrained restorer; instead, it obtains an initial restoration by directly minimizing the measurement consistency objective.
{By utilizing this estimate to initialize the reverse diffusion, 
AVIS begins the restoration process from a better starting point while enforcing measurement consistency for \emph{every} video chunk (Figure~\ref{fig:intro}(b)).}

To push efficiency further, we introduce a highly accelerated variant, \textbf{AVIS \emph{Flash}}.
Unlike AVIS, which applies iterative measurement guidance to every chunk, AVIS \emph{Flash} enforces measurement consistency only on the \emph{first} video chunk.
Interestingly, we observe that this is sufficient to restore the entire sequence (Figure~\ref{fig:intro}(c)).
Once the first chunk is fully restored and accurately grounded to the measurement, subsequent chunks can be generated through autoregressive propagation from this corrected prefix, completely bypassing explicit measurement guidance.
This eliminates iterative VAE passes for subsequent chunks, accelerating restoration by 5$\times$ while preserving zero-shot performance.

Compared to the leading non-autoregressive solver~\cite{spagnoletti2026lvtino}, AVIS reduces initial latency from 114s to 4s and improves throughput from 0.71 to 1.18 FPS while achieving superior restoration quality.
Pushing efficiency further, AVIS \emph{Flash} substantially boosts throughput to 5.91 FPS on a single RTX 4090 GPU while maintaining competitive performance, paving the way toward real-time deployment.

Our main contributions can be summarized as follows:

\begin{itemize}
    \item \textbf{AVIS.} We investigate the potential of adopting autoregressive video diffusion models for zero-shot video restoration. AVIS accelerates restoration by initializing the reverse diffusion process with a measurement-consistent estimate, reducing the required sampling steps while enforcing measurement consistency for every video chunk.
    \item \textbf{AVIS \emph{Flash}.} We introduce a highly accelerated variant of AVIS that retains the initialization while enforcing measurement consistency solely on the first video chunk. This substantially increases throughput, paving the way for real-time video inverse problem solvers.
\end{itemize}

\begin{figure}[t]
    \centering
    \includegraphics[width=\linewidth]{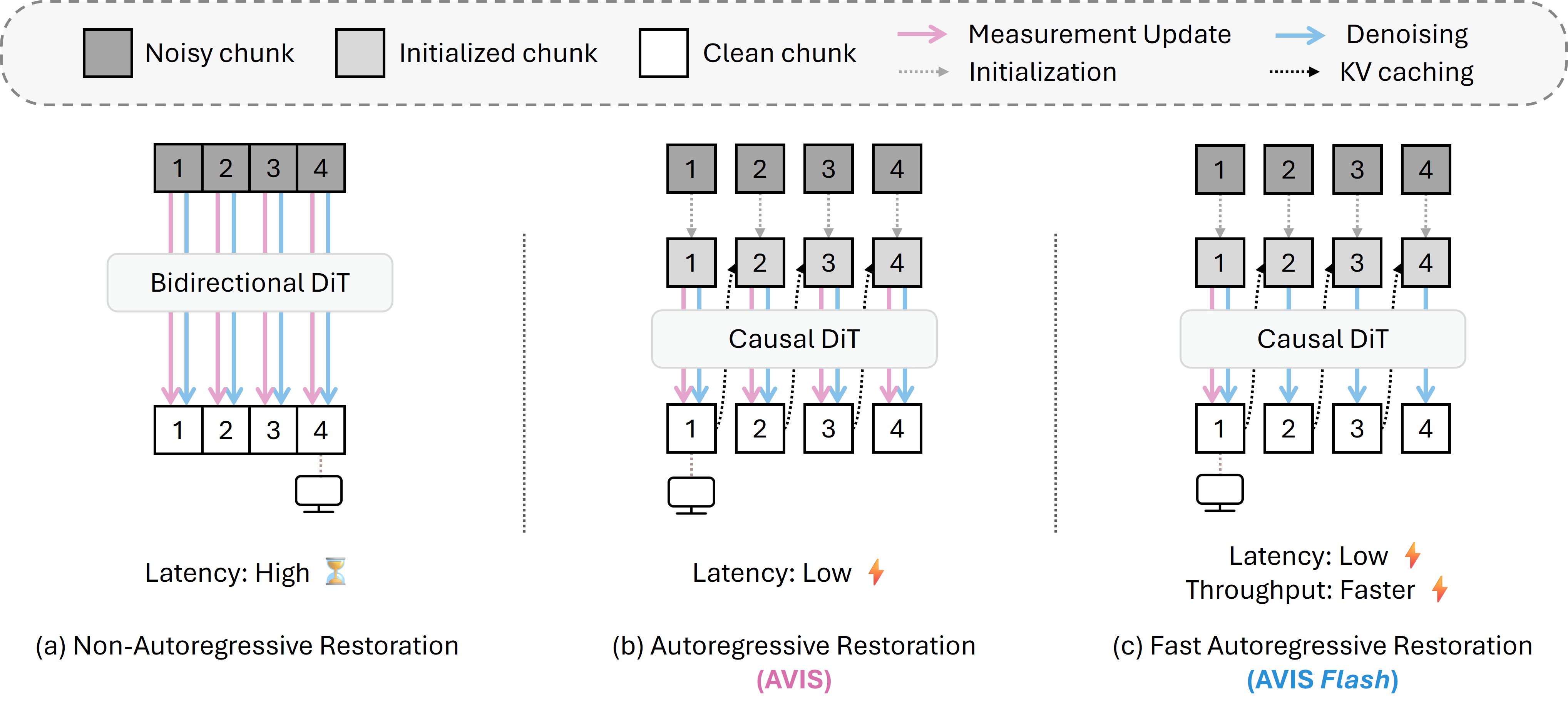}
    \vspace{-2em}
    \caption{\textbf{Overview of our proposed AVIS {and AVIS \emph{Flash} framework.}}
    (a) Non-autoregressive restoration processes the entire video holistically, suffering from high initial latency.
    (b) \textcolor{custompinktext}{AVIS} restores videos in a streaming manner and reduces sampling steps via measurement-consistent initialization while enforcing measurement updates for every video chunk.
    (c) \textcolor{custombluetext}{AVIS \emph{Flash}} retains the same initialization as AVIS but applies measurement updates only to the first video chunk. Subsequent chunks ($n \ge 2$) are restored through autoregressive propagation from the corrected prefix, eliminating iterative VAE passes and dramatically accelerating throughput.}
    \label{fig:intro}
    \vspace{-1em}
\end{figure}

\section{Related Work}
\label{sec:related}
\subsection{Diffusion for Video Inverse Problems}

Diffusion model-based Inverse problem Solvers (DIS)~\cite{chung2023diffusion, song2023pseudoinverseguided, wang2023zeroshot, chung2024decomposed, mardani2024a, song2024solving, rout2024solving, zhang2025improving, kim2025flowdps, park2025flowlps} enable repurposing the diffusion or flow-based models~\cite{ho2020denoising, song2020score, dhariwal2021diffusion, rombach2022high, podell2024sdxl, liu2023flow, lipman2023flow, esser2024scaling} originally trained to model the data distribution $p(\vx)$ as powerful plug-and-play priors.
By guiding the sampling trajectory toward the posterior $p(\vx|\vy)$, these methods achieve strong zero-shot performance across a wide range of image restoration tasks, producing solutions that are perceptually natural and consistent with the measurement $\vy$.

Motivated by this success, Diffusion model-based Video Inverse problem Solvers (DVIS)~\cite{daras2024warped, kwon2025solving, kwon2025vision, kwon2025video, spagnoletti2026lvtino} extend this paradigm to the video modality.
One line of work~\cite{daras2024warped, kwon2025solving, kwon2025vision} repurposes image diffusion priors~\cite{dhariwal2021diffusion, podell2024sdxl} for video restoration by introducing explicit temporal conditioning.
Warping-based approaches~\cite{chang2024how, daras2024warped} explicitly control temporal consistency using optical flow~\cite{teed2020raft}.
An alternative approach enforces temporal consistency without optical flow through batch-consistent sampling~\cite{kwon2025solving, kwon2025vision}, which efficiently synchronizes stochastic components across frames.
More recently, methods leveraging native video diffusion priors~\cite{kwon2025video, spagnoletti2026lvtino} have shown an improved ability to capture temporal correlations without requiring explicit temporal conditioning.

As mentioned earlier, all of these solvers restore the video holistically by sampling all frames simultaneously.
As a result, the first frame cannot be displayed until the entire sequence is fully restored, introducing latency equal to the time required to restore all frames. We pinpoint this issue as a critical barrier for real-time deployment and propose a direction toward efficient DVIS.

\subsection{Autoregressive Diffusion Models for Video Generation}

In recent years, both closed- and open-source models~\cite{sora2024, veo2025, kong2024hunyuanvideo, hacohen2024ltx, wan2025wan, agarwal2025cosmos, ali2025world} have demonstrated remarkable progress in foundational video generation.
Currently, the dominant open-source video diffusion models~\cite{kong2024hunyuanvideo, hacohen2024ltx, wan2025wan, ali2025world} typically adopt diffusion transformers~\cite{peebles2023scalable} with bidirectional attention~\cite{vaswani2017attention} that generate the entire video sequence simultaneously.

Recent works~\cite{chen2024diffusion, yin2025slow, huang2025self, ruhe2024rolling, kim2024fifo, zhang2025framepack, jin2025pyramidal} have explored autoregressive (AR) modeling, which generates the next frame by conditioning on previously generated ones, enabling low-latency generation.
One representative direction is distilling diffusion transformers with bidirectional attention into student transformers with causal attention, allowing support for key-value (KV) caching during inference~\cite{yin2025slow, huang2025self}.
Among these, Self Forcing~\cite{huang2025self} has emerged as a leading autoregressive video diffusion model.
It addresses the train-test gap by conditioning on its own previously generated outputs rather than ground-truth frames during training, preventing error accumulation (i.e., exposure bias) at test time.
Given its effectiveness in closing this gap and its KV caching efficiency, we adopt Self-Forcing as the backbone of our framework.

\section{Preliminaries: Autoregressive Video Diffusion Models}
\label{sec:preliminary}
\paragraph{Autoregressive (AR) modeling.}
AR video diffusion models combine sequential generation along the temporal axis with denoising diffusion processes.
Specifically, let $\vz = [\vz^1, \vz^2, \dots, \vz^N]$ be a sequence of $N$ video chunks, where each \emph{chunk} refers to a block corresponding to a short sequence of consecutive latent frames.
Rather than modeling the entire video at once, the model builds the sequence progressively, predicting each new video chunk conditioned on previously generated ones.
Mathematically, the joint distribution of the sequence is factorized sequentially as follows:
\begin{equation}
    p(\vz)=\prod_{n=1}^{N}p(\vz^{n}|\vz^{<n}).
\end{equation}
Each conditional distribution $p(\vz^{n}|\vz^{<n})$ is then parameterized by a diffusion model, where $\vz^n$ is generated by gradually denoising the initial Gaussian noise, conditioned on the past context $\vz^{<n}$, with the initial context $\vz^0=\emptyset$.

\paragraph{Diffusion process.}
To model the conditional distribution $p(\vz^{n}|\vz^{<n})$, we employ the flow matching framework~\cite{liu2023flow, lipman2023flow, esser2024scaling}, which offers a simple and deterministic ordinary differential equation (ODE) formulation for the diffusion process.
Specifically, let $\vz^{n}_0$ denote the clean $n$-th video chunk and $\vz^{n}_1 \sim \mathcal{N}(\mathbf{0}, \mathbf{I})$ be pure Gaussian noise. For a continuous timestep $t \in [0, 1]$, the intermediate noisy state $\vz^{n}_t$ is formulated as a linear interpolation:
\begin{equation}
    \label{eq:interpolation}
    \vz^{n}_t = (1-t)\vz^{n}_0 + t\vz^{n}_1.
\end{equation}
The corresponding ground-truth vector field driving this transformation is defined as $\vfield^{n}_t = \vz^{n}_1 - \vz^{n}_0$.

During training, the model $\vfield_\theta$ is optimized through conditional flow matching to regress this vector field using a mean squared error (MSE) objective:
\begin{equation}
    \mathcal{L}_{\text{CFM}} = \mathbb{E}_{t, \vz^n_0\sim p_\text{data},\vz^{n}_t \sim p_t(\vz^{n}_t | \vz^{n}_0)} \|\vfield_\theta(\vz^{n}_t, t; \vz_0^{<n}) - \vfield^n_t \|^2_2,
\end{equation}
where $p_t(\vz^{n}_t | \vz^{n}_0)$ denotes the time-dependent probability path induced by the linear interpolation in Eq.~\eqref{eq:interpolation}.
Then, the minimizer of this objective corresponds to the conditional expectation:
\begin{equation}
    \vfield_\theta(\vz^n_t, t; \vz_0^{<n}) = \mathbb{E}[\vz^n_1 - \vz^n_0 | \vz^n_t, \vz_0^{<n}].
\end{equation}
At inference, given a noisy video chunk $\vz^{n}_t$ at timestep $t$, the model predicts the vector field $\vfield_\theta(\vz^{n}_t, t; \vz_0^{<n})$. By rearranging Eq.~\eqref{eq:interpolation}, we can obtain a clean estimate $\hat{\vz}^{n}_{0|t}$, which approximates the posterior mean $\mathbb{E}[\vz^{n}_0 \mid \vz^{n}_t, \vz_0^{<n}]$:
\begin{equation}
    \label{eq:denoised_pred}
    \hat{\vz}^{n}_{0|t} := \mathbb{E}[\vz^{n}_0 \mid \vz^{n}_t, \vz_0^{<n}]
    = \vz^{n}_t - t \vfield_\theta(\vz^{n}_t, t; \vz_0^{<n}). 
\end{equation}
In practice, to avoid the computational overhead of processing the full past context at each step, the conditioning on $\vz_0^{<n}$ is efficiently implemented using its KV cache~\cite{yin2025slow, huang2025self}, denoted as $\text{KV}^{<n}$.
Subsequently, following the specific sampling procedure of our backbone~\cite{huang2025self}, the process continues to the next timestep $t - \Delta t$ by re-noising the clean estimate $\hat{\vz}^n_{0|t}$ from Eq.~\eqref{eq:denoised_pred}:
\begin{equation}
    \label{eq:self_forcing_reverse_step}
    \vz^n_{t-\Delta t} = (1-(t-\Delta t)) \hat{\vz}^n_{0|t} + (t-\Delta t)\vz_1^n,
    \quad \vz_1^n \sim \mathcal{N}(\mathbf{0}, \mathbf{I}).
\end{equation}

\section{AVIS: Autoregressive Video Inverse Problem Solver}
\label{sec:avis}

In this section, we detail the architecture of our Autoregressive Video Inverse problem Solver (AVIS) and its highly efficient variant, AVIS \emph{Flash}, designed to harness the underexplored potential of AR video diffusion models for zero-shot restoration.
We begin by introducing an efficient mechanism to enforce measurement consistency during the reverse diffusion process in Section~\ref{sec:dds}.
Building upon this, Section~\ref{sec:avis_framework} presents our core framework, AVIS, which accelerates the process by initializing the reverse diffusion with a measurement-consistent estimate, thereby reducing the required sampling steps.
Finally, in Section~\ref{sec:avis_flash}, we propose AVIS \emph{Flash}, which pushes efficiency further by eliminating multiple VAE passes for subsequent video chunks.

\subsection{Guidance for Measurement Consistency}
\label{sec:dds}

During the reverse diffusion process of the $n$-th noisy video chunk $\vz^n_{t}$, measurement consistency is enforced at each denoising step to guide the sampling trajectory toward the posterior.
The conventional approach is to compute the gradient of the measurement consistency term~\cite{kwon2025video, chung2023diffusion} (\eg, $\nabla_{\vz} \|\vy - \Ac(\mathcal{D}(\vz))\|^2$).
However, this requires backpropagation through both the VAE decoder $\mathcal{D}$ and the model $\vfield_\theta$, which is computationally prohibitive for high-dimensional video data. 

To ensure both efficiency and generalizability, we follow the common practice of recent DVIS~\cite{kwon2025solving, kwon2025vision, spagnoletti2026lvtino} by adopting a multi-step optimization algorithm for the update.
Specifically, we employ the Decomposed Diffusion Sampling (DDS)~\cite{chung2024decomposed} framework, which utilizes the conjugate gradient (CG) method~\cite{hestenes1952methods, liesen2013krylov} to efficiently enforce measurement consistency.

Formally, at each timestep $t$, we first obtain the clean latent estimate $\hat{\vz}^n_{0|t}$ from $\vz^n_t$ using Eq.~\eqref{eq:denoised_pred} and decode it into the pixel space:
\begin{equation}
    \hat{\vx}^n_{0|t} = \mathcal{D}(\hat{\vz}^n_{0|t}).
\end{equation}
Next, we enforce data consistency by solving the following proximal optimization problem through the aforementioned CG method:
\begin{equation}
    \label{eq:proximal}
    \tilde{\vx}^n_{0|t} := \arg\min_{\vx} \frac{\gamma}{2} \|\vy - \Ac(\vx)\|^2 + \frac{1}{2} \|\vx - \hat{\vx}^n_{0|t}\|^2.
\end{equation}
To avoid complex parameter tuning, we fix $\gamma = 1$ throughout this paper. 
We then re-encode the updated pixel-space estimate $\tilde{\vx}^n_{0|t}$ back into the latent space:
\begin{equation}
    \tilde{\vz}^n_{0|t} = \mathcal{E}(\tilde{\vx}^n_{0|t}).
\end{equation}
By doing so, we enforce measurement consistency using only forward passes of $\mathcal{D}$ and $\mathcal{E}$, entirely avoiding the expensive neural network Jacobian computation. 
Finally, the updated latent estimate $\tilde{\vz}^n_{0|t}$ is renoised back to the next timestep $t - \Delta t$ using Eq.~\eqref{eq:self_forcing_reverse_step}, 
\begin{equation}
    \label{eq:renoise}
    \vz^n_{t-\Delta t} = (1 - (t-\Delta t))\tilde{\vz}^n_{0|t} + (t-\Delta t)\vz^n_1, \quad \vz^n_1\sim \mathcal{N}(\mathbf{0},\mathbf{I}),
\end{equation}
to continue the reverse process.

\subsection{AVIS: Accelerating DVIS with a Better Starting Point}
\label{sec:avis_framework}
Diffusion models are inherently slow, as they require numerous steps to generate data from pure Gaussian noise.
To accelerate solving inverse problems, CCDF~\cite{chung2022come} proposes initializing the reverse diffusion process from a \emph{better} starting point.
Specifically, it obtains a coarse estimate from an auxiliary pre-trained network, diffuses it to an intermediate timestep $t_0 < 1$, and starts the reverse sampling from there.
By starting at $t_0$, CCDF effectively reduces the total number of sampling steps, thereby significantly reducing inference time while leveraging the stochastic contraction property of the reverse diffusion path.

However, its reliance on a task-specific auxiliary network for the initial correction fundamentally sacrifices the zero-shot generality of the solver.
To overcome this limitation, we propose a training-free, multi-step optimization strategy to obtain a better starting point, preserving the acceleration benefits of the CCDF principle.

Specifically, we obtain the starting point of the reverse sampling as follows.
We first compute a measurement-consistent initial estimate $\vxinit$ in pixel space using a multi-step optimization algorithm (e.g., CG) that minimizes the measurement consistency objective:
\begin{equation}
\label{eq:cg}
\vxinit = \arg\min_{\vx} \|\vy - \Ac(\vx)\|^2.
\end{equation}
Next, we encode $\vxinit$ to get $\vzinit$ and diffuse it to timestep $t_0$:
\begin{equation}
\label{eq:init_noisy}
\vz_{t_0} = [\vz^1_{t_0}, \vz^2_{t_0}, \dots, \vz^N_{t_0}] = (1-t_0)\vzinit + t_0\vz_1, \quad \vz_1 \sim \mathcal{N}(\mathbf{0}, \mathbf{I}).
\end{equation}

While this initial estimate $\vxinit$ satisfies measurement consistency, it is only a coarse estimate and may lack perceptual realism.
The role of the reverse diffusion process is to transport this estimate onto the high-density region of the video data manifold.
To this end, we perform $K$ sampling steps starting from $t_0$, yielding a step size of $\Delta t = t_0 / K$ for each reverse diffusion process.
During this process, we continuously apply DDS guidance at each step to ensure the trajectory remains strictly faithful to the given measurement.
By integrating the guidance and the accelerated sampling strategy into the autoregressive diffusion process, we establish our complete framework: the \textbf{A}utoregressive \textbf{V}ideo \textbf{I}nverse problem \textbf{S}olver (\textbf{AVIS}).

\subsection{AVIS \emph{Flash}: Accelerating AVIS with Autoregressive Propagation}
\label{sec:avis_flash}
Here, we propose AVIS \emph{Flash}, a highly accelerated variant that fully unlocks the potential of autoregressive propagation. 
While it shares the same initialization at $t_0$ with AVIS, it fundamentally eliminates the need for iterative measurement guidance after the first chunk.
Although the guidance introduced in Section~\ref{sec:dds} avoids expensive neural network Jacobian computations, applying it to every video chunk requires iterative VAE passes at each reverse step, which remains a bottleneck for high-throughput video restoration.

Empirically, we discover a strikingly simple yet effective solution: bypassing measurement guidance for all subsequent chunks ($n \ge 2$) dramatically accelerates throughput with competitive restoration performance.
We attribute this success to the temporal context propagation induced by the autoregressive formulation, combined with the initialization.
Once the first video chunk ($n=1$) is grounded to the measurements through guidance, its restored features are extracted and stored as a prefix cache $\KV^1$.
Furthermore, since the initialization in Section~\ref{sec:avis_framework} provides a measurement-consistent estimate that is then re-noised to an intermediate timestep $t_0$, the model does not need to generate subsequent chunks ($n \ge 2$) from scratch; it simply refines this estimate through the guidance-free sampling steps in Eq.~\eqref{eq:denoised_pred} and Eq.~\eqref{eq:self_forcing_reverse_step}, conditioned on the accumulated prefix cache $\KV^{<n}$.

To clarify the intuition behind AVIS \emph{Flash}, we provide an error decomposition within our framework.
Specifically, Proposition~\ref{obs:chunkwise_stability} decomposes the final error into two terms: the initial error from the measurement-consistent estimate, and the context error propagated from previously restored chunks.
Detailed derivations are provided in Appendix~\ref{appendix:proof}.

\begin{restatable}{proposition}{bound}
    \label{obs:chunkwise_stability}
    Let $\epsilon_0^n$ denote the initial error of the current chunk, and let $\delta_n$ denote the context error propagated from previously restored chunks.
    Under local Lipschitz assumptions on the autoregressive vector field $\vfield_\theta$, the final error $\epsilon_K^n$ of the $n$-th chunk after $K$ sampling steps satisfies
    \begin{equation}
        \epsilon_K^n \leq \Lambda_K \epsilon_0^n + B_K \delta_n,
    \end{equation}
    where $\Lambda_K$ and $B_K$ depend on the sampling schedule and local Lipschitz constants.
\end{restatable}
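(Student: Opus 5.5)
We set up the proof as a one-step error recursion for the guidance-free sampler, unrolled over the $K$ steps. Fix chunk $n$ and a reference (ideal) trajectory $\vz^{n,\star}_{t_k}$, $t_k = t_0 - k\Delta t$, $k=0,\dots,K$. This trajectory is generated by the same update rule (Eq.~\eqref{eq:denoised_pred} followed by Eq.~\eqref{eq:self_forcing_reverse_step}) but with ground-truth context $\vz^{<n,\star}_0$. It is started from the ideal initialization $\vz^{n,\star}_{t_0}=(1-t_0)\vz^{n,\star}_0+t_0\vz_1^n$ and uses the same noise draws $\vz_1^n$, so the noise terms cancel when two trajectories are differenced.

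The error quantities are defined as follows:
\begin{itemize}
\item the step error is $\epsilon^n_k := \|\vz^n_{t_k}-\vz^{n,\star}_{t_k}\|$ for the actual iterate;
\item the initial error $\epsilon^n_0$ is the gap between $(1-t_0)\vzinit^n$ and $(1-t_0)\vz_0^{n,\star}$;
\item the context error is $\delta_n := \|\vz_0^{<n}-\vz_0^{<n,\star}\|$, or equivalently a distance between the KV caches $\KV^{<n}$.
\end{itemize}
The \emph{local Lipschitz assumption} is made precise as: on a neighbourhood containing both trajectories, for each $t_k$,
\[
\|\vfield_\theta(\vz,t_k;\vc)-\vfield_\theta(\vz',t_k;\vc')\| \le L_k\|\vz-\vz'\| + M_k\|\vc-\vc'\|.
\]

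\textbf{Step 1: one-step bound.} Write the update as $\vz_{t_{k+1}} = (1-t_{k+1})\bigl(\vz_{t_k}-t_k\vfield_\theta(\vz_{t_k},t_k;\vc)\bigr) + t_{k+1}\vz_1$. Subtracting the reference update (the $t_{k+1}\vz_1$ terms cancel) and applying the triangle inequality with the Lipschitz bound gives
\[
\epsilon^n_{k+1} \le \lambda_k\,\epsilon^n_k + \beta_k\,\delta_n, \qquad \lambda_k := (1-t_{k+1})(1+t_kL_k), \quad \beta_k := (1-t_{k+1})t_kM_k .
\]
If one prefers to measure error on the clean estimate $\hat{\vz}_{0|t}$, the recursion is identical up to constants.

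\textbf{Step 2: unrolling.} A discrete Grönwall argument (induction on $k$) gives
\[
\epsilon^n_K \le \Bigl(\prod_{k=0}^{K-1}\lambda_k\Bigr)\epsilon^n_0 + \Bigl(\sum_{k=0}^{K-1}\beta_k\prod_{j=k+1}^{K-1}\lambda_j\Bigr)\delta_n .
\]
So we may take $\Lambda_K=\prod_k\lambda_k$ and $B_K=\sum_k\beta_k\prod_{j>k}\lambda_j$. Both depend only on the schedule $\{t_k\}$ (through $t_0$, $K$, $\Delta t$) and on the local constants $L_k, M_k$, as the proposition claims. Strictly, $\epsilon^n_K$ is the error after the $K$ update steps, i.e.\ at $t_K=0$. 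With the convention of Eq.~\eqref{eq:self_forcing_reverse_step}, the last step should be read as outputting $\hat{\vz}^n_{0|t}$ without re-noising; this only changes the final factor $(1-t_{K})$ to $1$.

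\textbf{Main obstacle.} The difficulty is mostly in stating the hypothesis so that the derivation is honest, not in any estimate. Two points need care.
\begin{itemize}
\item \emph{Local validity.} The constants are only local, so we must assume both trajectories stay in the region where the Lipschitz bounds hold. I would state this explicitly as an assumption rather than prove it.
\item \emph{Dependence on the context.} $\vfield_\theta$ depends on the context through attention over the KV cache, so Lipschitz continuity in $\vc$ must be assumed with respect to a fixed norm on the concatenated past latents.
\end{itemize}
With those assumptions fixed, Steps 1 and 2 are routine. I would close with an interpretive remark: for AVIS \emph{Flash}, $\delta_n$ is small once the first chunk is grounded by guidance, and $\epsilon^n_0$ is controlled by the measurement-consistent initialization. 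This explains why guidance-free propagation for $n\ge2$ remains accurate.
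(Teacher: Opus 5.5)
Your proposal is correct and follows the paper's proof essentially step for step. You couple the actual and target trajectories through shared noise so that the re-noising terms cancel, derive the one-step recursion $\epsilon^n_{k+1}\le\lambda_k\epsilon^n_k+\beta_k\delta_n$ with exactly the paper's coefficients $\lambda_k=(1-t_{k+1})(1+t_kL_k)$ and $\beta_k=(1-t_{k+1})t_kM_k$, and unroll it by a discrete Gr\"onwall argument. The only differences are cosmetic: you fold the factor $(1-t_0)$ into your definition of $\epsilon_0^n$, whereas the paper takes $\epsilon_0^n=\|\mathbf{z}^n_{\text{init}}-\mathbf{z}^{n,\text{tgt}}\|$ and puts $(1-t_0)$ into $\Lambda_K$, and you measure $\delta_n$ as a norm of the concatenated prefix rather than as an average over chunks, which only rescales the context Lipschitz constant.
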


This decomposition provides an interpretive view for understanding the design of AVIS \emph{Flash}.
The measurement-consistent initialization is designed to reduce the per-chunk initial error $\epsilon_0^n$ before autoregressive sampling.
Meanwhile, the context error $\delta_n$ contributes to the bound as an additive perturbation term.
This indicates that the context error propagated from previous chunks affects the current chunk through incremental additions rather than through multiplicative amplification of the initial error.

AVIS \emph{Flash} can also be naturally extended to handle long video sequences.
By periodically re-injecting measurement consistency (e.g., applying guidance every $N$ chunks), we can further suppress error accumulation.
This periodic guidance effectively mitigates drift, allowing for more stable restoration over extended video sequences, as demonstrated in Appendix~\ref{appendix:more_experiments}.
The complete AVIS \emph{Flash} pipeline is detailed in Algorithm~\ref{alg:avis_flash}.

\begin{algorithm}[b]
    \caption{Autoregressive Video Inverse problem Solver (\textcolor{custompinktext}{AVIS} and \textcolor{custombluetext}{AVIS \emph{Flash}})}
    \label{alg:avis_flash}
    \small
    \begin{algorithmic}[1]
        \Require Measurement $\vy$, operator $\Ac$, diffusion model $\vfield_\theta$, number of chunks $N$, start time $t_0$, encoder $\mathcal{E}$, decoder $\mathcal{D}$, and mode $\in \{\text{\textcolor{custompinktext}{AVIS}}, \text{\textcolor{custombluetext}{AVIS \emph{Flash}}}\}$.
        \State $\vxinit = \arg\min_{\vx} \|\vy - \Ac(\vx)\|^2$ \textbf{and} $\vzinit \leftarrow \mathcal{E}(\vxinit)$ \Comment{Pre-restoration}
        \State $\KV^0 \leftarrow \emptyset$
        \For{$n = 1:N$}
          \State $\vz_{t_0}^n \leftarrow (1-t_0)\vzinit^{n} + t_0 \vz_1, \quad \vz_1 \sim \mathcal{N}(\mathbf{0}, \mathbf{I})$ \Comment{Initialization}
          \For{$t : t_0 \rightarrow 0$}
            \State $\hat\vz_{0|t}^n \leftarrow \vz_t^n - t\vfield_\theta(\vz_t^n, t;\, \KV^{<n})$ \Comment{Obtain clean estimate}
            
            \If{mode \textbf{is} \textcolor{custompinktext}{AVIS} \textbf{or} $(\text{mode \textbf{is} \textcolor{custombluetext}{AVIS \emph{Flash}} and } n = 1)$}
                \State $\hat{\vz}^n_{0|t} \gets \text{Solve Eq.~\eqref{eq:proximal} via CG}$ \Comment{Measurement consistency update}
            \EndIf
            \State $\vz^n_{t-\Delta t} \leftarrow (1 - (t-\Delta t))\hat\vz_{0|t}^n + (t-\Delta t)\vz_1, \quad \vz_1 \sim \mathcal{N}(\mathbf{0}, \mathbf{I})$ \Comment{Re-noise}
        
          \EndFor
          \State \textbf{Display} $\mathcal{D}(\vz_{0}^n)$ \textbf{and} $\KV^{\leq n} \leftarrow \text{Update KV}(\vz_{0}^n,\, \KV^{<n})$
        \EndFor
        \State \Return $[\vz_{0}^1, \dots, \vz_{0}^N]$
    \end{algorithmic}
\end{algorithm}

\section{Experiments}
\label{sec:experiments}

\subsection{Experimental Setup}
\noindent\textbf{Dataset \& Inverse Problems.}
We evaluate our method on 100 high-resolution videos sourced from Pexels~\cite{pexels2024}.
Each video is spatially resized to $480\times854$ and temporally cropped to the initial 81 frames.
We comprehensively evaluate our method on five video restoration tasks: (1) \emph{Super-Resolution} at a 4$\times$ scaling factor, (2) \emph{Random Inpainting} with 50\% of the video masked, (3) \emph{Gaussian Deblur} with a kernel size of 61 and a $\sigma$ of 3.0, (4) \emph{Temporal Average} using 7$\times$ averaging over a 7-frame causal window, and (5) \emph{Spatio-Temporal Average} using 4$\times$ spatial average pooling and 4$\times$ temporal averaging over a 4-frame causal window.

\noindent\textbf{Baselines.}
We compare our methods (AVIS and AVIS \emph{Flash}) against other diffusion model-based solvers for high-resolution video inverse problems: DiffIR2VR-Zero~\cite{yeh2024diffir2vr}, VISION-XL~\cite{kwon2025solving}, and LVTINO~\cite{spagnoletti2026lvtino}.
We include DiffIR2VR-Zero specifically for super-resolution (SR), as it only supports SR among the various inverse problems addressed in this work.
For completeness, detailed descriptions of each baseline method are provided in Appendix~\ref{appendix:details}.

\noindent\textbf{Metrics \& Implementation Details.}
To assess restoration quality, we report five fidelity and perceptual metrics: PSNR, SSIM~\cite{wang2004image}, LPIPS~\cite{zhang2018unreasonable}, FID~\cite{heusel2017gans}, and FVD~\cite{unterthiner2019fvd}.
We further evaluate video quality using VBench~\cite{huang2024vbench}, specifically focusing on Subject Consistency (Sub. C.), Background Consistency (Bg. C.), Motion Smoothness (M. Smooth.), Aesthetic Quality (Aesth.), and Imaging Quality (Imag.).
Computational efficiency is measured in terms of latency (time to restore the first frame in seconds), total restoration time, and throughput (FPS).
Unless otherwise specified, all experiments are conducted on a single NVIDIA RTX 4090 GPU.
Based on our ablation studies, we employ $t_0=0.1$ and $K=2$ as our default settings.
Please refer to Appendix~\ref{appendix:details} for more information.

\subsection{Results}

\noindent\textbf{Computational Efficiency.}
The primary advantage of our framework is its practical efficiency.
As shown in Table~\ref{tab:efficiency}, both AVIS and AVIS \emph{Flash} reduce the initial latency to only 4 seconds, whereas the baselines suffer from severe delays (167 seconds for VISION-XL and 114 seconds for LVTINO).
Beyond latency improvements, our framework significantly enhances overall throughput.
While the baselines fail to surpass 1 FPS, AVIS successfully achieves over 1 FPS.
Notably, AVIS \emph{Flash} achieves 5.91 FPS, making it over $12\times$ and $8\times$ faster than VISION-XL and LVTINO, respectively.
Furthermore, we found that deploying AVIS \emph{Flash} on a single NVIDIA H100 GPU pushes the performance even further to an initial latency of 1.85 seconds and a throughput of 10.2 FPS, paving the way toward real-time deployment.

\begin{table}[h]
\centering
\small
\resizebox{0.5\columnwidth}{!}{
\begin{tabular}{l ccc}
\toprule
Method & Latency (s) $\downarrow$ & Time (s) $\downarrow$ & FPS (frame/s)  $\uparrow$ \\
\midrule
DiffIR2VR-Zero & 1300 & 1300 & 0.06 \\
VISION-XL    & 167 & 167 & 0.49 \\
LVTINO       & \underline{114} & 114 & 0.71 \\
\rowcolor{custompink!20}
AVIS         & \textbf{4} & \underline{68.5} & \underline{1.18} \\
\rowcolor{customblue!30}
AVIS \emph{Flash}   & \textbf{4} & \textbf{13.7} & \textbf{5.91} \\
\bottomrule
\end{tabular}
}
\vspace{0.5em}
\caption{%
\textbf{Efficiency comparison for $4\times$ video super-resolution.} Our framework (\textcolor{custompinktext}{AVIS} and \textcolor{custombluetext}{AVIS \emph{Flash}}) achieves significant improvements across all efficiency metrics. \textbf{Bold} and \underline{underline} indicate the best and second-best results, respectively.
}
\label{tab:efficiency}
\vspace{-1.5em}
\end{table}

\noindent\textbf{Restoration Performance.}
Table~\ref{tab:main} compares AVIS and AVIS \emph{Flash} against baselines across five restoration tasks.
Overall, AVIS achieves the best restoration performance, outperforming or remaining highly competitive with all baselines on the majority of metrics.
In particular, AVIS consistently attains the best or near-best PSNR, SSIM, LPIPS, and FVD across most tasks, validating the effectiveness of combining autoregressive video diffusion priors with our initialization.
While LVTINO occasionally achieves favorable perceptual metrics, it relies on both a video diffusion prior~\cite{yin2025slow} and an additional image diffusion prior~\cite{yin2024improved}.
In contrast, AVIS achieves better overall results with only a single backbone.

AVIS \emph{Flash} targets a faster operating point.
By bypassing iterative measurement updates for subsequent chunks, AVIS \emph{Flash} experiences a slight drop in certain metrics compared to AVIS.
However, it remains competitive with baselines such as VISION-XL and LVTINO, while reducing initial latency by over 100 seconds and boosting throughput by over $12\times$ and $8\times$, respectively (Table~\ref{tab:efficiency}).
These results highlight the central benefit of our framework: AVIS establishes a high-quality autoregressive video inverse problem solver, while AVIS \emph{Flash} offers a favorable efficiency--performance trade-off for practical deployment.
For additional qualitative comparisons, please refer to Appendix~\ref{appendix:more_experiments}, where we provide long video restoration results and examples of novel view synthesis formulated as another type of inpainting problem.

\begin{table*}[t]
\centering
\small
\resizebox{\textwidth}{!}{%
\begin{tabular}{l | ccccc | ccccc}
\toprule
Method & PSNR$\uparrow$ & SSIM$\uparrow$ & LPIPS$\downarrow$ & FVD$\downarrow$ & FID$\downarrow$ & Sub. Con.$\uparrow$ & Bg. Con.$\uparrow$ & M. Smooth.$\uparrow$ & Aesth.$\uparrow$ & Imag.$\uparrow$ \\
\midrule
\multicolumn{11}{l}{\textbf{\emph{Super Resolution}}} \\
\quad DiffIR2VR-Zero & 27.46 & 0.722 & 0.157 & 225.6 & 24.49 & 95.92 & 95.22 & 98.14 & \textbf{56.00} & \textbf{69.09} \\
\quad VISION-XL     & 29.89 & 0.800 & 0.115 & 76.24 & \underline{22.51} & 96.10 & 95.75 & 99.05 & 51.96 & 53.73 \\
\quad LVTINO        & \underline{30.04} & \underline{0.824} & \underline{0.102} & \underline{59.42} & \textbf{18.32} & \textbf{96.36} & 96.09 & 99.11 & \underline{52.63} & \underline{59.05} \\
\rowcolor{custompink!20}
\quad AVIS          & \textbf{30.38} & \textbf{0.826} & \textbf{0.101} & \textbf{40.36} & 23.94 & \underline{96.30} & \textbf{96.21} & \textbf{99.24} & 52.19 & 55.13 \\
\rowcolor{customblue!30}
\quad AVIS \emph{Flash}    & 29.95 & 0.818 & 0.109 & 62.98 & 25.46 & 96.29 & \underline{96.14} & \underline{99.15} & 51.80 & 54.59 \\
\midrule
\multicolumn{11}{l}{\textbf{\emph{Inpainting}}} \\
\quad VISION-XL     & 29.39 & 0.801 & 0.123 & 144.8 & 18.92 & 96.09 & 94.51 & 98.95 & 52.82 & 60.37 \\
\quad LVTINO        & 26.36 & 0.737 & 0.187 & 322.0 & 35.75 & 95.95 & \textbf{95.56} & 98.60 & 51.94 & \textbf{66.86} \\
\rowcolor{custompink!20}
\quad AVIS          & \textbf{31.27} & \textbf{0.870} & \textbf{0.075} & \textbf{68.90} & \textbf{10.21} & \textbf{96.24} & 94.61 & \textbf{99.20} & \textbf{54.06} & \underline{64.15} \\
\rowcolor{customblue!30}
\quad AVIS \emph{Flash}    & \underline{30.29} & \underline{0.842} & \underline{0.091} & \underline{81.75} & \underline{13.96} & \underline{96.18} & \underline{94.84} & \underline{99.14} & \underline{53.22} & 62.90 \\
\midrule
\multicolumn{11}{l}{\textbf{\emph{Gaussian Deblur}}} \\
\quad VISION-XL     & \textbf{31.10} & \textbf{0.835} & \textbf{0.093} & \underline{41.52} & \textbf{16.38} & 96.27 & 96.23 & 99.08 & \textbf{52.52} & \textbf{58.01} \\
\quad LVTINO        & 30.64 & \textbf{0.835} & \underline{0.109} & 47.45 & \underline{20.37} & \textbf{96.43} & 96.18 & 99.20 & \underline{51.65} & \underline{57.17} \\
\rowcolor{custompink!20}
\quad AVIS          & \underline{30.69} & 0.831 & 0.110 & \textbf{36.07} & 24.40 & 96.30 & \textbf{96.41} & \textbf{99.25} & 51.54 & 54.63 \\
\rowcolor{customblue!30}
\quad AVIS \emph{Flash}    & 30.41 & 0.827 & 0.115 & 49.66 & 25.29 & \underline{96.31} & \underline{96.33} & \underline{99.23} & 51.24 & 54.39 \\
\midrule
\multicolumn{11}{l}{\textbf{\emph{Temporal Average}}} \\
\quad VISION-XL     & 30.46 & 0.825 & 0.095 & 110.5 & 12.15 & 95.87 & 95.00 & 98.67 & 54.27 & 63.08 \\
\quad LVTINO        & \underline{31.68} & \underline{0.878} & 0.069 & 77.76 & 8.015 & \textbf{96.34} & 95.11 & 99.01 & \textbf{55.73} & \underline{66.36} \\
\rowcolor{custompink!20}
\quad AVIS          & \textbf{32.10} & \textbf{0.887} & \textbf{0.063} & \textbf{54.85} & \textbf{6.803} & \underline{96.26} & \underline{95.58} & \textbf{99.12} & \underline{55.29} & 67.07 \\
\rowcolor{customblue!30}
\quad AVIS \emph{Flash}    & 31.57 & \underline{0.878} & \underline{0.067} & \underline{57.92} & \underline{6.829} & 96.21 & \textbf{95.65} & \underline{99.05} & 55.16 & \textbf{67.54} \\
\midrule
\multicolumn{11}{l}{\textbf{\emph{Spatio-Temporal Average}}} \\
\quad VISION-XL     & 29.62 & 0.794 & 0.124 & 99.54 & \underline{24.13} & 95.96 & 95.41 & 98.96 & 51.44 & 52.73 \\
\quad LVTINO        & \underline{29.85} & \textbf{0.818} & \textbf{0.110} & 95.35 & \textbf{19.73} & \textbf{96.27} & \underline{95.98} & 99.03 & \textbf{52.42} & \textbf{58.02} \\
\rowcolor{custompink!20}
\quad AVIS          & \textbf{29.99} & \textbf{0.818} & \underline{0.112} & \textbf{62.11} & 25.61 & \underline{96.22} & \textbf{96.05} & \textbf{99.20} & \underline{51.62} & \underline{54.28} \\
\rowcolor{customblue!30}
\quad AVIS \emph{Flash}    & 29.81 & \underline{0.815} & 0.117 & \underline{73.23} & 25.94 & \underline{96.22} & 95.97 & \underline{99.13} & 51.32 & 54.20 \\
\bottomrule
\end{tabular}
}
\vspace{-0.5em}
\caption{%
    \textbf{Quantitative comparison across five video restoration tasks.} Our proposed \textcolor{custompinktext}{AVIS} achieves superior performance, and its highly efficient variant, \textcolor{custombluetext}{AVIS \emph{Flash}}, maintains competitive performance. \textbf{Bold} and \underline{underline} indicate the best and second-best results, respectively.
}
\vspace{-0.5em}
\label{tab:main}
\end{table*}

\subsection{Ablation Study}
\label{sec:ablation}
To isolate the contributions of our core components, we report the ablation study results on AVIS \emph{Flash} for video inpainting in Table~\ref{tab:ablation}.

\noindent\textbf{Autoregressive Propagation.}
To isolate the contribution of autoregressive context propagation, we remove the KV cache for subsequent chunks while keeping the rest of AVIS \emph{Flash} unchanged.
Under this setting, each subsequent chunk starts from the same initialized noisy state as in AVIS \emph{Flash}, but is restored without access to the previously generated prefix.
As shown in Table~\ref{tab:ablation}(a), removing the KV cache consistently degrades all metrics.
This highlights the benefit of autoregressive propagation, which provides performance gains complementary to the initialization.
In other words, the autoregressive propagation of the restored prefix further improves subsequent chunk restoration and maintains temporal coherence across the video.

To further isolate the role of autoregressive propagation, we visualize the case where subsequent chunks are generated with KV cache conditioning but without the initialization (\ie, starting from pure noise).
As shown in the middle row of Figure~\ref{fig:ar_prop}, autoregressive propagation alone effectively preserves the context from previous chunks, but gradually drifts from the desired restoration.
This drift is mitigated by our initialization, as shown in the bottom row of Figure~\ref{fig:ar_prop}.
Starting the reverse process from the initialized anchor keeps each chunk closer to the desired restoration trajectory, while periodic re-injection of measurement consistency further prevents long-range drift for longer videos, as described in Section~\ref{sec:avis_flash}.

\noindent\textbf{Start Time ($t_0$).}
We study the effect of the start time $t_0 \in \{0.1, 0.2, 0.5\}$.
As shown in Table~\ref{tab:ablation}(b), a smaller $t_0$ consistently improves the majority of metrics, with $t_0=0.1$ yielding the best overall performance.
This is consistent with the results in Figure~\ref{fig:ar_prop}: as $t_0$ increases, the starting point approaches pure noise, and the reverse process increasingly relies on pure autoregressive propagation, which gradually drifts away from the desired restoration.
Although a larger $t_0$ slightly improves some perceptual VBench metrics, it substantially degrades fidelity metrics.
We therefore adopt $t_0=0.1$ as our default setting.

\noindent\textbf{Sampling Steps ($K$).}
We also evaluate the impact of the number of sampling steps ($K \in \{1, 2, 4\}$).
Table~\ref{tab:ablation}(c) reveals a clear trade-off: as the number of steps increases, certain perceptual metrics (\eg, FID) improve, whereas pixel-level fidelity metrics (PSNR, SSIM) slightly decrease.
Since larger $K$ increases inference time with diminishing returns in overall quality, we adopt $K=2$ as the default setting to achieve the best balance between speed and quality.

\begin{table*}[t]
\centering
\small
\resizebox{\textwidth}{!}{%
\begin{tabular}{l | ccccc | ccccc}
\toprule
Setting & PSNR$\uparrow$ & SSIM$\uparrow$ & LPIPS$\downarrow$ & FVD$\downarrow$ & FID$\downarrow$ & Sub. C.$\uparrow$ & Bg. C.$\uparrow$ & M. Smooth.$\uparrow$ & Aesth.$\uparrow$ & Imag.$\uparrow$ \\
\midrule
\multicolumn{11}{l}{(a) \textbf{\emph{Autoregressive Propagation}}} \\[0.2em]
\rowcolor{gray!25}
\quad w/ AR Prop (default) & \textbf{30.29} & \textbf{0.842} & \textbf{0.091} & \textbf{81.75} & \textbf{13.96} & \textbf{96.18} & \textbf{94.84} & \textbf{99.14} & \textbf{53.22} & \textbf{62.90} \\
\quad w/o AR Prop         & 29.75 & 0.830 & 0.103 & 108.3 & 17.67 & 95.99 & 94.82 & 99.11 & 52.97 & 62.36 \\
\midrule
\multicolumn{11}{l}{(b) \textbf{\emph{Start Time ($t_0$)}}} \\[0.2em]
\rowcolor{gray!25}
\quad $t_0=0.1$ (default)& \textbf{30.29} & \textbf{0.842} & \textbf{0.091} & \textbf{81.75} & \textbf{13.96} & \textbf{96.18} & \textbf{94.84} & \underline{99.14} & 53.22 & 62.90 \\
\quad $t_0=0.2$                 & \underline{28.98} & \underline{0.810} & \underline{0.112} & \underline{112.0} & \underline{16.48} & \underline{96.13} & \underline{94.53} & \underline{99.14} & \underline{53.28} & \underline{63.45} \\
\quad $t_0=0.5$                 & 25.50 & 0.716 & 0.192 & 283.3 & 30.50 & 95.92 & 94.11 & \textbf{99.15} & \textbf{53.31} & \textbf{65.21} \\
\midrule
\multicolumn{11}{l}{(c) \textbf{\emph{Sampling Steps ($K$)}}} \\[0.2em]
\quad $K=1$                     & \textbf{30.40} & \textbf{0.843} & \textbf{0.091} & \textbf{81.73} & 14.64 & \underline{96.18} & 94.73 & 99.13 & 53.15 & 62.47 \\
\rowcolor{gray!25}
\quad $K=2$ (default)  & \underline{30.29} & \underline{0.842} & \textbf{0.091} & \underline{81.75} & \underline{13.96} & \underline{96.18} & \underline{94.84} & \underline{99.14} & \underline{53.22} & \underline{62.90} \\
\quad $K=4$                     & 29.95 & 0.837 & \underline{0.094} & 81.78 & \textbf{13.69} & \textbf{96.23} & \textbf{95.17} & \textbf{99.15} & \textbf{53.34} & \textbf{63.46} \\
\bottomrule
\end{tabular}
}
\vspace{-0.5em}
\caption{%
  \textbf{Ablation studies on AVIS \emph{Flash} for the video inpainting task.} (a) Effect of autoregressive propagation with KV cache. (b) Impact of the start time $t_0$. (c) Impact of the number of diffusion sampling steps $K$. \textbf{Bold} and \underline{underline} indicate the best and second-best results, respectively.
}
\label{tab:ablation}
\end{table*}

\begin{figure}[t]
    \centering
    \includegraphics[width=\linewidth]{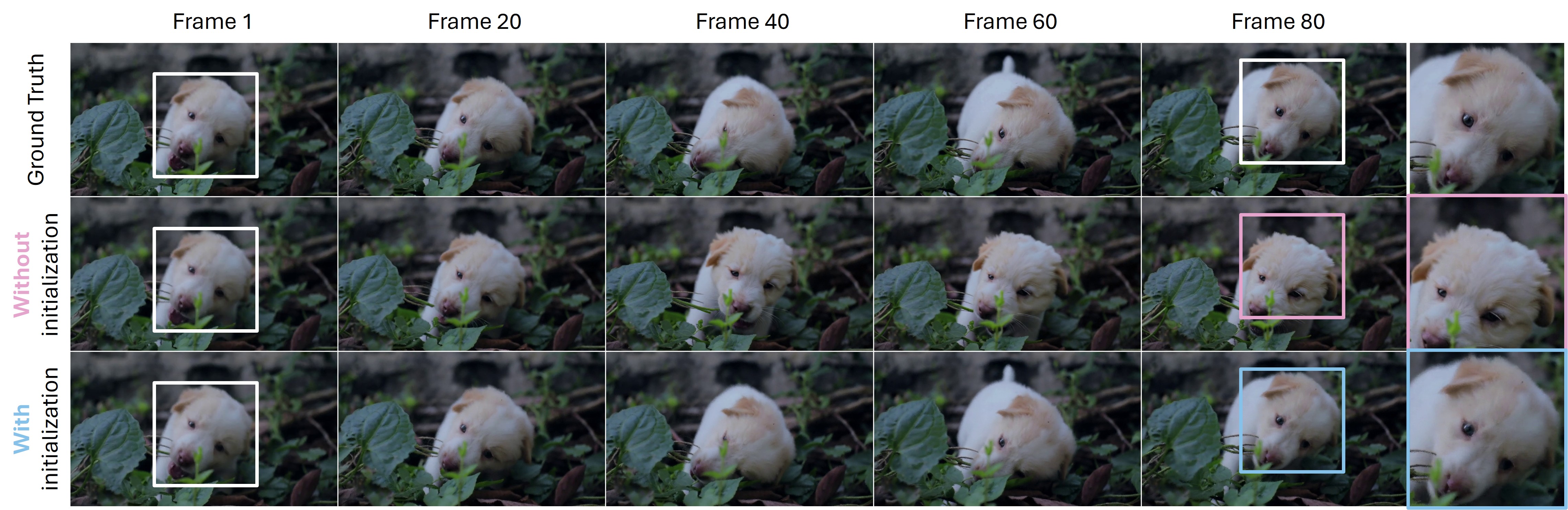}
    \vspace{-1.5em}
    \caption{\textbf{Effect of AR propagation and initialization.} (Top) Ground-truth video. (Middle) While AR propagation preserves the preceding context, it exhibits gradual error accumulation over time. (Bottom) Our initialization for the reverse diffusion effectively mitigates this temporal drift.}
    \label{fig:ar_prop}
    \vspace{-1em}
\end{figure}

\section{Conclusion}
\label{sec:conclusion}
In this paper, we proposed Autoregressive Video Inverse problem Solver (AVIS) and its highly efficient variant, AVIS \emph{Flash}, for practical deployment of diffusion model-based video inverse problem solvers.
{Extensive experiments show that the proposed methods improve efficiency over existing non-autoregressive solvers by large margins, with AVIS achieving superior restoration performance and AVIS \emph{Flash} providing a dramatic boost in throughput while maintaining competitive restoration quality.}
We believe our framework establishes a strong foundation for future research, bridging the gap between generative priors and real-time deployment of video restoration.

\noindent\textbf{Limitations and Broader Impacts.}
While AVIS paves the way for the practical, real-time deployment of diffusion-based video inverse problem solvers, it still requires multiple VAE passes for the initial pixel-space guidance.
Incorporating measurement consistency updates directly in the latent space with auxiliary latent operators, similar to \cite{raphaeli2025silo, hong2025inversecrafter}, represents an interesting direction toward further acceleration.
Furthermore, the capability to restore high-fidelity videos from severely degraded inputs raises potential ethical concerns regarding misinformation and the unauthorized reconstruction of sensitive visual data.

\clearpage

{
\small
\bibliographystyle{unsrtnat}
\bibliography{main}
}
 
\clearpage
\appendix
\section{Discussion on the Error Bound and Proof}
\label{appendix:proof}

Let $t_0 > t_1 > \cdots > t_K = 0$ be the reverse sampling schedule of $K$ steps for the $n$-th chunk, where $\vz^{n,\text{tgt}}$ denotes the target chunk and $\vzinit^n=\mathcal{E}(\vxinit^n)$ denotes the initial chunk encoded from the measurement-consistent estimate.
We define the initial error as:
\begin{equation}
    \epsilon_0^n := \|\vzinit^n - \vz^{n,\text{tgt}}\|.
\end{equation}
Consider the reverse sampling trajectories of both the current chunk $\vz^n_t$ and the target chunk $\vz^{n,\text{tgt}}_t$, coupled via the shared Gaussian noise $\vxi_0$.
At the start time $t_0$, the noisy states are given by:
\begin{equation}
    \vz^n_{t_0}=(1-t_0)\vzinit^n + t_0 \vxi_0,
    \quad
    \vz^{n,\text{tgt}}_{t_0}=(1-t_0)\vz^{n,\text{tgt}} + t_0 \vxi_0, \quad \vxi_{0}\sim \mathcal{N}(\mathbf{0},\mathbf{I}).
\end{equation}
Therefore, the initial error at the start time $t_0$ is explicitly scaled down by the noise level:
\begin{equation}
    \label{eq:initial_scaled_error}
    \epsilon^n_{t_0} := \|\vz^n_{t_0} - \vz^{n,\text{tgt}}_{t_0}\| = (1-t_0)\epsilon_0^n.
\end{equation}

For each reverse step $t_k \to t_{k+1}$ coupled via the shared Gaussian noise $\vxi_{k+1}$, the guidance-free updates are given by:
\begin{align}
    \hat{\vz}^n_{0|t_k} &= \vz^n_{t_k} - t_k \vfield_\theta(\vz^n_{t_k}, t_k; \KV^{<n}), \\
    \hat{\vz}^{n,\text{tgt}}_{0|t_k} &= \vz^{n,\text{tgt}}_{t_k} - t_k \vfield_\theta(\vz^{n,\text{tgt}}_{t_k}, t_k; \KV^{<n, \text{tgt}}),\\
    \vz^n_{t_{k+1}} &= (1-t_{k+1})\hat{\vz}^n_{0|t_k} + t_{k+1}\boldsymbol{\xi}_{k+1},  \\ 
    \vz^{n,\text{tgt}}_{t_{k+1}} &= (1-t_{k+1})\hat{\vz}^{n,\text{tgt}}_{0|t_k} + t_{k+1}\boldsymbol{\xi}_{k+1}, \quad \vxi_{k+1}\sim \mathcal{N}(\mathbf{0},\mathbf{I}).
\end{align}

To capture the sensitivity of the vector field to previously restored chunks, we introduce the context error $\delta_n$, defined as the average mismatch over the preceding chunks ($m < n$):
\begin{equation}
    \delta_n := \frac{1}{n-1} \sum_{m=1}^{n-1}\|\vz^m_0 - \vz^{m,\text{tgt}}_0\|.
\end{equation}
This quantity represents the accumulated prefix mismatch presented to the autoregressive model.

We assume that, at each timestep $t_k$, the vector field $\vfield_\theta$ is Lipschitz continuous with respect to both the current chunk and the context condition:
\begin{align}
    \|\vfield_\theta(\vz,t_k;\vc)-\vfield_\theta(\vz',t_k;\vc)\|
    &\le L_{z,k}\|\vz-\vz'\|,
    \\
    \|\vfield_\theta(\vz,t_k;\vc)-\vfield_\theta(\vz,t_k;\vc')\|
    &\le L_{c,k}\|\vc-\vc'\|_\text{ctx}.
\end{align}

We are now ready to prove Proposition~\ref{obs:chunkwise_stability}.

\bound*

\begin{proof}
Since the trajectories for the current chunk $\vz^n_{t_{k+1}}$ and the target chunk $\vz^{n, \text{tgt}}_{t_{k+1}}$ are coupled via the shared Gaussian noise at step $k+1$, the explicit noise terms cancel in the difference:
\begin{align*}
    &\|\vz^n_{t_{k+1}} - \vz^{n,\text{tgt}}_{t_{k+1}}\| \\
    &= (1-t_{k+1}) \Big\|\vz^n_{t_k} - \vz^{n,\text{tgt}}_{t_k} - t_k\Big(\vfield_\theta(\vz^n_{t_k}, t_k; \KV^{<n})
    -\vfield_\theta(\vz^{n,\text{tgt}}_{t_k}, t_k; \KV^{<n,\text{tgt}})\Big)\Big\|.
\end{align*}
By the triangle inequality and the Lipschitz assumptions,
\begin{align*}
    \|\vz^n_{t_{k+1}} - \vz^{n,\text{tgt}}_{t_{k+1}}\| &\leq (1-t_{k+1})\Big(\|\vz^n_{t_k} - \vz^{n,\text{tgt}}_{t_k}\|
    +t_k L_{z,k}\|\vz^n_{t_k} - \vz^{n,\text{tgt}}_{t_k}\| +t_k L_{c,k}\delta_n\Big) \\
    &=(1-t_{k+1})(1+t_k L_{z,k})\|\vz^n_{t_k} - \vz^{n,\text{tgt}}_{t_k}\| +(1-t_{k+1})t_k L_{c,k}\delta_n.
\end{align*}
Let $\lambda_k := (1-t_{k+1})(1+t_k L_{z,k})$ and $\beta_k := (1-t_{k+1})t_k L_{c,k}$. This yields the recursive relation:
\begin{equation}
    \epsilon_{t_{k+1}}^n \leq \lambda_k \epsilon_{t_k}^n + \beta_k \delta_n.
\end{equation}
Applying this relation recursively over $k=0,\cdots,K-1$ yields the final error after $K$ sampling steps:
\begin{align}
    \epsilon_K^n = \epsilon_{t_K}^n & \leq \left(\prod_{r=0}^{K-1}\lambda_r\right)\epsilon_{t_0}^n
    + \sum_{r=0}^{K-1}\left(\prod_{\ell=r+1}^{K-1}\lambda_\ell\right)\beta_r\,\delta_n \\
    & \leq \underbrace{\left((1-t_0)\prod_{r=0}^{K-1}\lambda_r\right)}_{:= \Lambda_K} \epsilon_0^n + \underbrace{\sum_{r=0}^{K-1}\left(\prod_{\ell=r+1}^{K-1}\lambda_\ell\right)\beta_r}_{:= B_K} \delta_n.
\end{align}

This concludes the proof of the Proposition.
\end{proof}

\paragraph{Discussion of the start time $t_0$.}
For a fixed number of sampling steps $K$, the start time $t_0$ affects the two terms in the bound differently.
Under the linear schedule $t_k=(1-k/K)t_0$, the context-dependent coefficient $B_K$ decreases as $t_0$ becomes smaller, indicating that a smaller $t_0$ suppresses the propagation of prefix mismatch across chunks.
Meanwhile, the coefficient $\Lambda_K$ does not possess an interior minimizer with respect to $t_0$. Its lowest values are approached only near the boundaries ($t_0\to 0$ or $t_0\to 1$), depending on the local Lipschitz constants of the vector field.
Consequently, the bound is better viewed as an interpretive insight rather than a prescription for the optimal choice of $t_0$.
In particular, as $t_0 \to 1$, the starting latent becomes nearly pure noise, so the reverse process becomes progressively less dependent on the pre-restored initial estimate and more dominated by the learned video prior and autoregressive context.
This helps explain why a larger $t_0$ empirically leads to more generation-like behavior, whereas a smaller $t_0$ better preserves restoration fidelity in our evaluations.

\section{Experimental details}
\label{appendix:details}

\subsection{Implementation Details}
\label{appendix:implementation}

Our implementation is built upon the Self-Forcing codebase~\cite{huang2025self} with Wan2.1-T2V-1.3B~\cite{wan2025wan}.
To ensure straightforward reproducibility, we retain the original configurations, except for specific parameter settings adapted for video restoration tasks (initial timestep $t_0$ and sampling steps $K$).
For the video encoding, we use Wan-VAE~\cite{wan2025wan} with spatial and temporal downsampling factors of 8 and 4, respectively.
Consequently, an input RGB video with $1+T$ frames at a resolution of $H \times W$ is encoded into a latent tensor of $1+T/4$ latent frames with spatial dimensions of $H/8 \times W/8$.
Following recent advancements in autoregressive (AR) modeling~\cite{yin2025slow, huang2025self}, we generate a chunk of $L=3$ latent frames at a time, consistent with the original Self-Forcing setup.
Although the original framework employs a 4-step diffusion, we found that adapting it to a 2-step process ($K=2$) with a start time of $t_0=0.1$ provides a favorable balance between computational efficiency and visual quality, specifically for video restoration.

\paragraph{Initial Estimation.}
We detail the procedure for obtaining the initial estimate through multi-step optimization. 
To derive the initial estimate for each degradation, we employ the conjugate gradient (CG) method to optimize Eq.~\eqref{eq:cg}. 
Typically, the optimization is initialized directly with the measurement $\vy$. 
However, to address dimensional mismatches in specific tasks, we apply tailored preprocessing to $\vy$. 
For super-resolution and spatio-temporal averaging, we align the spatial dimensions using bilinear interpolation. 
For inpainting, inspired by DDVM~\cite{saxena2023surprising}, we apply nearest-neighbor infilling to the measurement to establish a robust starting point. 
Following this initialization, we perform a task-specific number of CG updates: 5 steps for Gaussian deblurring and super-resolution, 50 steps for temporal averaging, and 100 steps for spatio-temporal averaging. 
Notably, for inpainting, since the initialized measurement already satisfies the measurement consistency term, no further CG updates are required.

\paragraph{Inference and Optimization Setup.}
Once the initial estimate is obtained, it serves as the starting point for the reverse diffusion process.
For the sampling configuration, we employ a start time of $t_0=0.1$ and $K=2$ sampling steps for the main experiments.
The justification for these optimal values is provided in our ablation studies (Section~\ref{sec:ablation}).
For the DDS update, we avoid complex parameter tuning by fixing $\gamma=1$ for the proximal optimization in Eq.~\eqref{eq:proximal}.
Additionally, we consistently conduct 5 CG updates across all degradation types.

\paragraph{Computing resources.}
Leveraging the efficiency of our framework, we process videos at a $480\times832$ resolution with a 4-second initial latency and a 5.9 FPS throughput, requiring only 18.4 GB of VRAM on a single consumer-grade GPU (NVIDIA RTX 4090).
Furthermore, we confirm that deploying our framework on a single NVIDIA H100 GPU accelerates processing even more, achieving an initial latency of $1.85$ seconds and a throughput of $10.18$ FPS with a VRAM consumption of $26.79$ GB.

\subsection{Baseline models}
\label{appendix:baselines}

The following summarizes the key ideas of the diffusion model-based video inverse problem solvers (DVIS) used for the evaluation.

\begin{itemize}
\item \textbf{DiffIR2VR-Zero}~\cite{yeh2024diffir2vr} adapts pre-trained image restoration diffusion models~\cite{lin2024diffbir} for video restoration without additional training, maintaining temporal consistency through hierarchical latent warping and token merging techniques.
\item \textbf{VISION-XL}~\cite{kwon2025vision} is a zero-shot inverse problem solver for high-definition video restoration using latent image diffusion models~\cite{podell2024sdxl}, efficiently achieving temporal consistency on a single GPU through pseudo-batch sampling and inversion strategies.
\item \textbf{LVTINO}~\cite{spagnoletti2026lvtino} is a zero-shot inverse problem solver for high-definition video restoration that integrates Video Consistency Models (VCMs)~\cite{yin2025slow} and Image Consistency Models (ICMs)~\cite{yin2024improved}, thereby reducing sampling steps and achieving superior restoration quality.
\end{itemize}
For all baselines, we used their official implementations in the same environment as AVIS.
We configured them to process the same resolution and number of frames, while applying the same degradation operators.
Additionally, since LVTINO exceeded the 24GB VRAM limit, we pre-computed the text embeddings and removed the text encoders from both the image and video diffusion models during inference to fit within memory constraints.

\paragraph{Other baselines.}
Although there exist other potential candidates for DVIS, direct comparisons with methods such as SVI~\cite{kwon2025solving} and VDPS~\cite{kwon2025video} were infeasible due to limitations in their supported resolutions and high memory consumption caused by heavy operations (e.g., batch processing and Jacobian computations).
Furthermore, several recent methods~\cite{chang2024how, daras2024warped, bai2025instantvir} were excluded from our evaluation due to the lack of publicly available implementations.
Nevertheless, we believe our experimental setup provides a sufficiently representative baseline for evaluating zero-shot video restoration.
As VISION-XL~\cite{kwon2025vision} and LVTINO~\cite{spagnoletti2026lvtino} are currently recognized as the leading zero-shot solvers for high-definition video restoration, our experiments successfully demonstrate the effectiveness of our proposed method within the scope of zero-shot video inverse problem methodologies.

\subsection{Evaluation metrics}
\label{appendix:evaluation}

To ensure straightforward reproducibility, our evaluations rely on official implementations and widely adopted PyTorch~\cite{paszke2019pytorch} packages.
For frame-wise evaluation, we extract individual frames from each video sequence to compute the respective metrics.
Specifically, we utilize \texttt{pytorch-msssim} for SSIM~\cite{wang2004image}, \texttt{lpips} for LPIPS~\cite{zhang2018unreasonable}, and \texttt{pytorch-fid} for FID~\cite{heusel2017gans}.
For video quality assessment, we employ FVD~\cite{unterthiner2019fvd} and VBench~\cite{huang2024vbench}, adopting a publicly available repository~\cite{ge2024content} for FVD computation and the official repository for VBench.
All primary evaluations are conducted on a single NVIDIA RTX 4090 GPU.
To further demonstrate the computational efficiency of our method, we additionally report inference times measured on a single NVIDIA H100 GPU.

For completeness, we provide a concise description of the evaluation metrics used in our experiments:

\begin{itemize}
    \item \textbf{PSNR} measures the pixel-level accuracy between the restored and ground-truth frames.
    \item \textbf{SSIM} evaluates the structural similarity between the restored and ground-truth frames.
    \item \textbf{LPIPS} calculates the perceptual similarity based on feature embeddings extracted from a pre-trained backbone~\cite{zhang2018unreasonable}.
    \item \textbf{FID} measures the distance between the feature distributions of generated and real images using the Inception-v3 network~\cite{szegedy2016rethinking}.
    It is computed under the assumption that both feature distributions follow multivariate Gaussian distributions.
    \item \textbf{FVD} extends the principles of FID to measure the quality and diversity of generated videos, utilizing features extracted from an Inflated 3D ConvNet (I3D)~\cite{carreira2017quo}.
    \item \textbf{Subject Consistency (VBench)} evaluates whether the appearance of a subject remains consistent by calculating the DINO~\cite{caron2021emerging} feature similarity across frames.
    \item \textbf{Background Consistency (VBench)} assesses the temporal stability of the background by computing the CLIP~\cite{radford2021learning} feature similarity across frames.
    \item \textbf{Motion Smoothness (VBench)} measures the temporal continuity of the generated motions using motion priors derived from a video frame interpolation model~\cite{li2023amt}.
    \item \textbf{Aesthetic Quality (VBench)} evaluates the overall artistic appeal and visual naturalness of the video frames using the LAION aesthetic predictor~\cite{laion2022}.
    \item \textbf{Imaging Quality (VBench)} evaluates the visual distortion (\eg, over-exposure, noise, blur) appearing in the video frames using the MUSIQ~\cite{ke2021musiq} image quality predictor.
\end{itemize}


\section{Further experimental results}
\label{appendix:more_experiments}

\subsection{Long Video Restoration}
\label{appendix:long_video}

\begin{figure}[h]
    \centering
    \includegraphics[width=\linewidth]{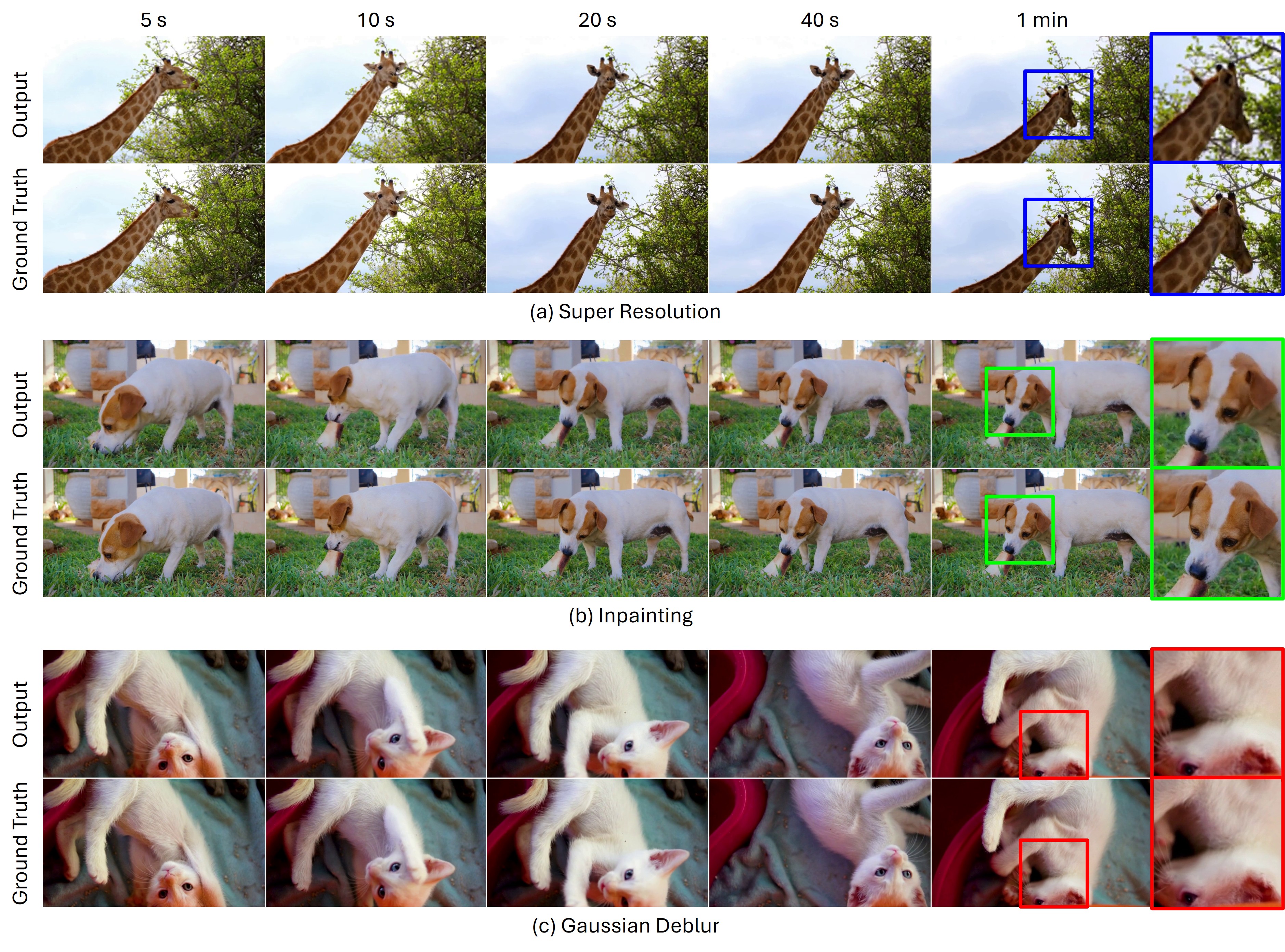}
    \vspace{-2.0em}
    \caption{\textbf{Qualitative results of long video restoration.} By periodically re-injecting measurement consistency (every 7 chunks), AVIS \emph{Flash} effectively prevents temporal drift and remains consistent with the ground truth, even in the final frames. The timestamps at the top indicate the elapsed time within the long video.}
    \label{fig:long_video}
\end{figure}

To further support the claims made in Section~\ref{sec:avis_flash}, we conduct additional experiments evaluating the scalability of our approach on longer video sequences.
Specifically, we aim to restore a one-minute video consisting of 960 frames at 16 FPS.
As described in Section~\ref{sec:avis_flash}, AVIS \emph{Flash} can be further stabilized by periodically re-injecting measurement consistency.
In this long video experiment, we apply guidance every 7 chunks to prevent error accumulation over time.

As demonstrated in Figure~\ref{fig:long_video}, even for a one-minute sequence, AVIS \emph{Flash} successfully retains the ground truth context and prevents noticeable temporal drift in later frames.
This periodic resetting effectively mitigates accumulated context mismatch, ensuring robust restoration regardless of the overall sequence length.

\subsection{Novel View Synthesis}
\label{appendix:nvs}

\begin{figure}[h]
    \centering
    \includegraphics[width=\linewidth]{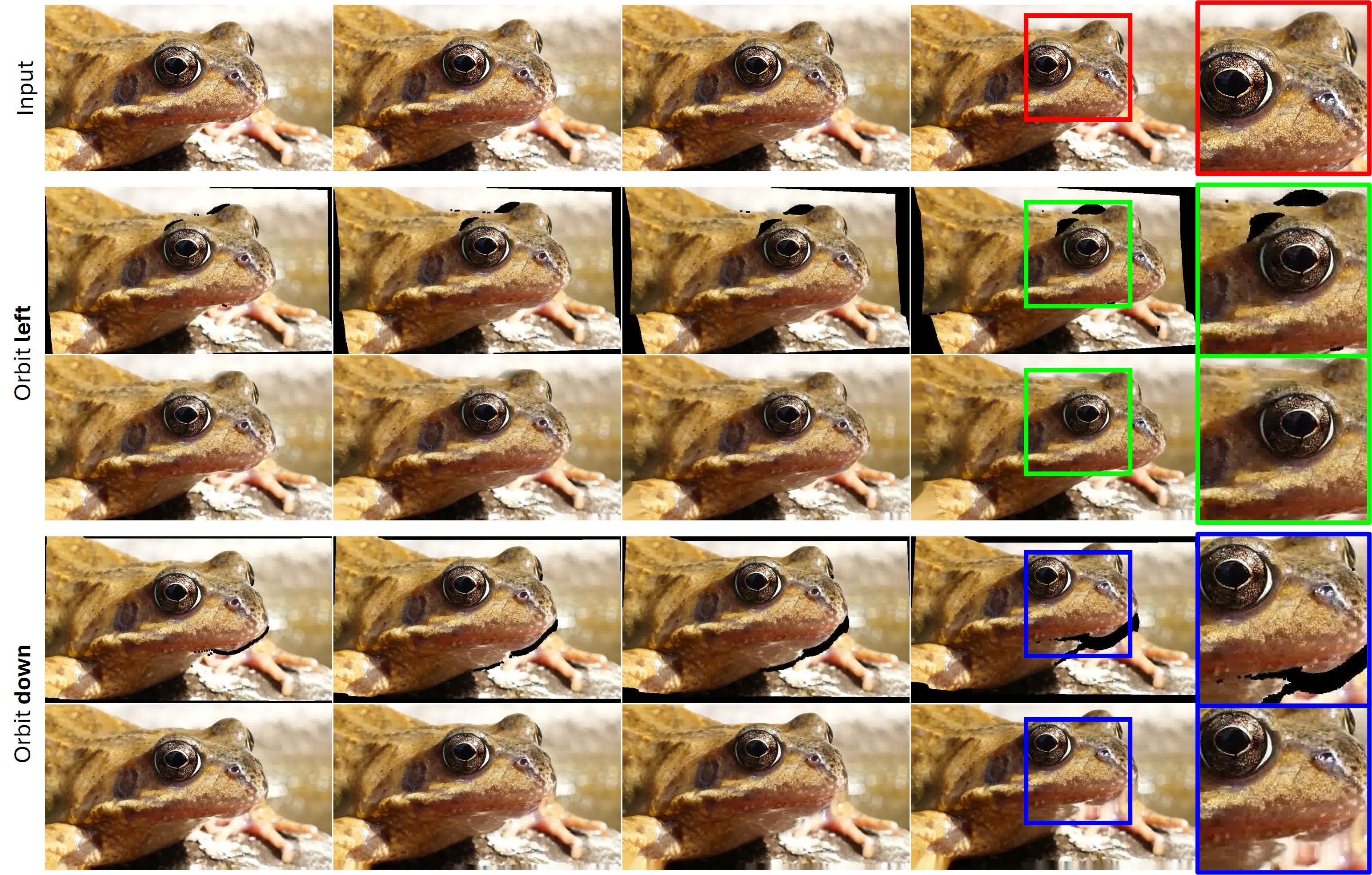}
    \vspace{-1.5em}
    \caption{{\textbf{Qualitative results of novel view video synthesis.} We demonstrate the capability of AVIS \emph{Flash} to generate novel views under target camera trajectories, including orbit left and orbit down.
    For each camera trajectory, the upper row shows the initial warped frames, and the lower row shows the inpainted outputs of AVIS \emph{Flash}.
    The boxes highlight corresponding regions in the final frames, illustrating how the method can plausibly synthesize the intended camera movements.}}
    \label{fig:nvs}
\end{figure}

In this section, we explore the feasibility of applying AVIS \emph{Flash} for inference-time novel view synthesis.
We adopt the depth-based geometric warping pipeline from~\cite{jeong2025reangle} for initial novel view rendering.
Specifically, we first estimate the depth map of each video frame using the monocular depth estimator~\cite{yang2024depth}.
Using the input video frames and their corresponding depth maps, we then lift the pixels to construct dynamic 3D point clouds.
These point clouds are then reprojected onto the 2D plane according to a target camera trajectory.

During this reprojection, disocclusions naturally occur, exposing previously occluded regions and resulting in ``holes'' in the warped frames.
We demonstrate that AVIS \emph{Flash} can act as an inpainting operator to fill in these missing regions.
As shown in Figure~\ref{fig:nvs}, AVIS \emph{Flash} can plausibly synthesize the disoccluded content, demonstrating its potential applicability for inference-time novel view synthesis.

\subsection{Qualitative Evaluation}
\label{appendix:qualitative}
Here, we provide comprehensive qualitative comparisons for each degradation type, with detailed discussions provided in the respective figure captions.
All figures are best viewed zoomed in. For complete video comparisons, please refer to our project page.

\begin{figure}[h]
    \centering
    \includegraphics[width=\linewidth]{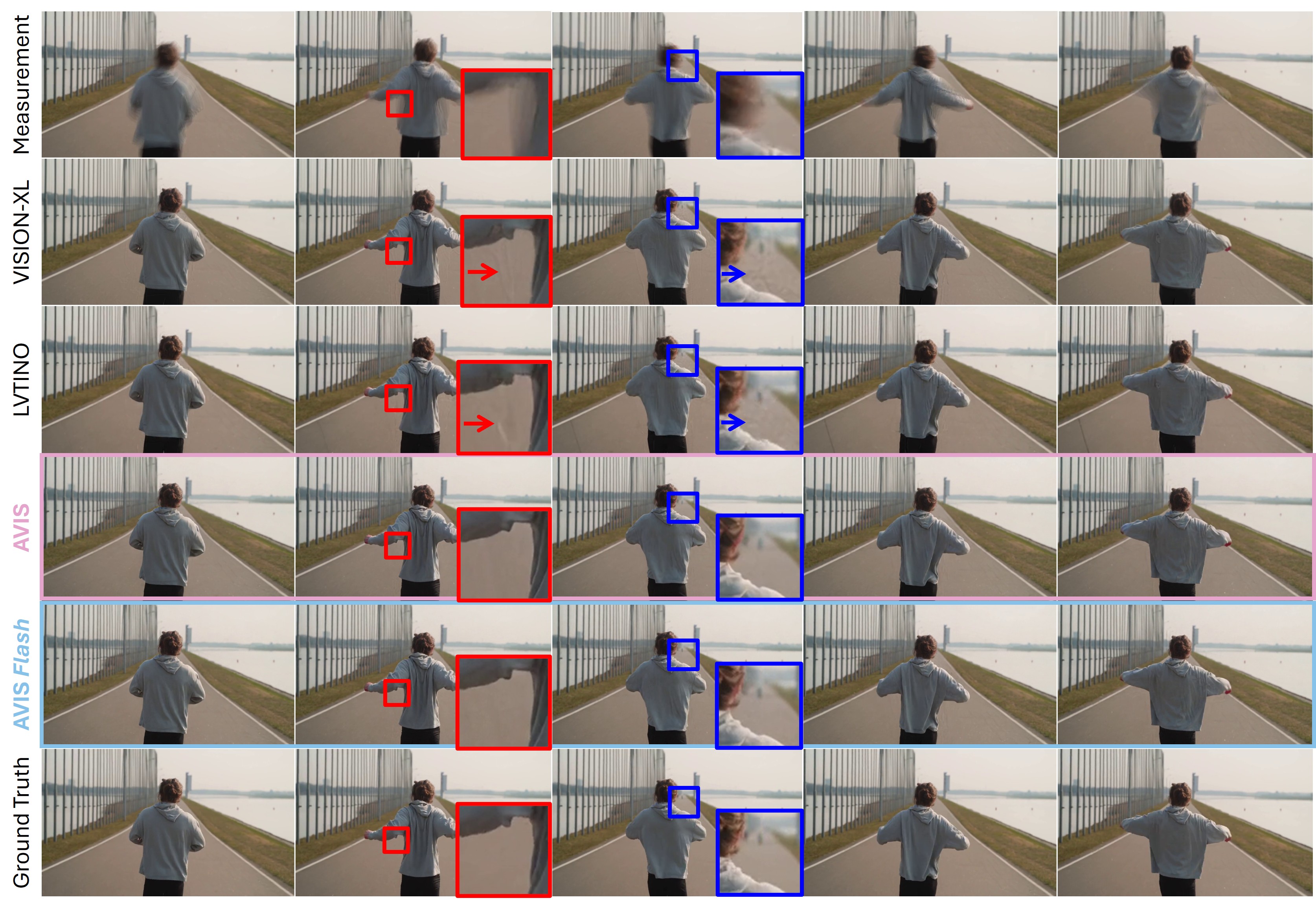}
    \vspace{-1.5em}
    \caption{\textbf{Qualitative comparisons on temporal averaging.} VISION-XL and LVTINO suffer from noticeable artifacts in the highlighted regions. In contrast, AVIS recovers the most plausible details. Notably, despite its much faster inference, AVIS \emph{Flash} maintains visual quality comparable to AVIS, preserving overall structural integrity.}
    \label{fig:qual_temporal}
\end{figure}

\begin{figure}[h]
    \centering
    \vspace{-0.5em}
    \includegraphics[width=\linewidth]{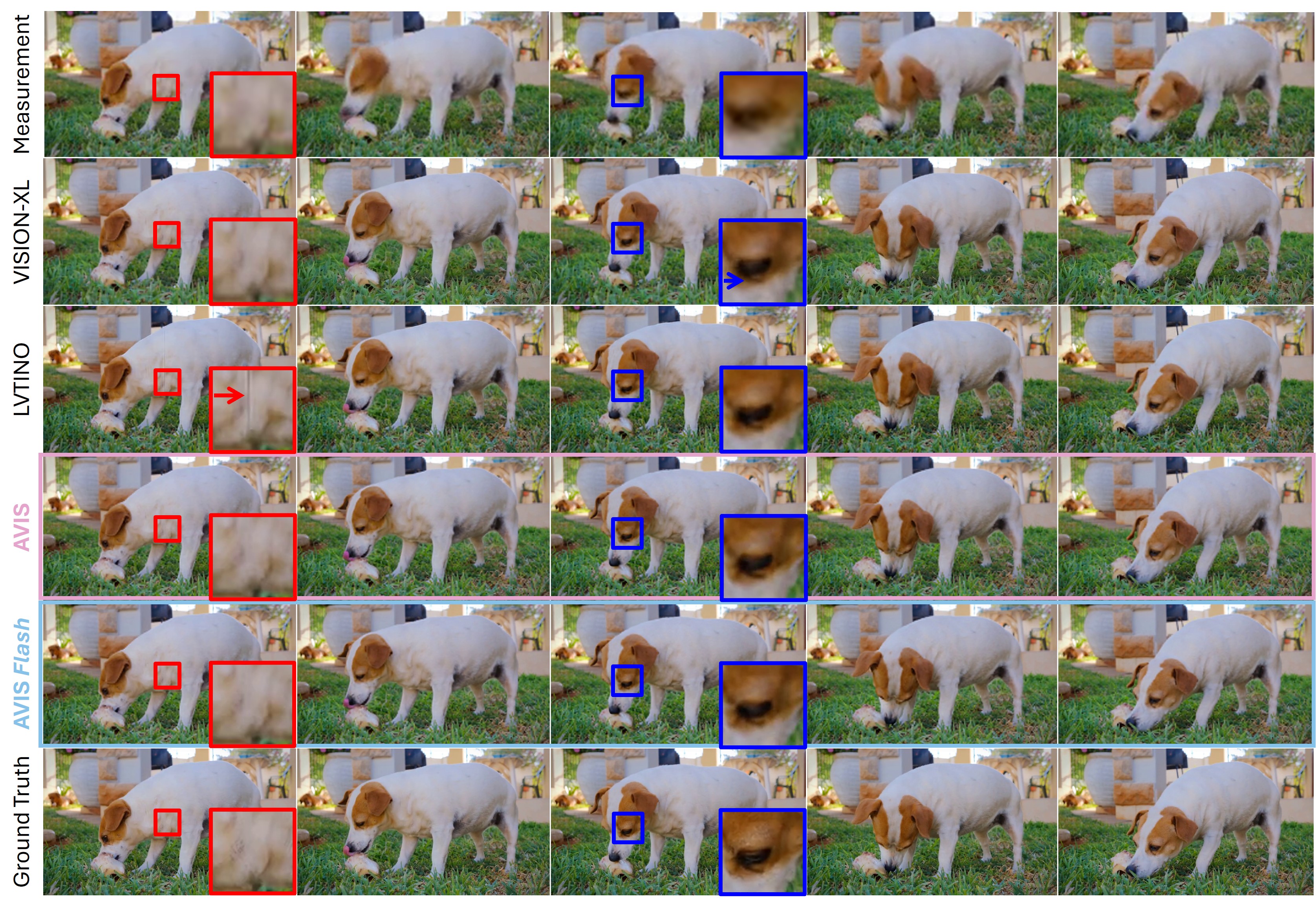}
    \vspace{-1.5em}
    \caption{\textbf{Qualitative comparisons on spatio-temporal averaging.} LVTINO produces noticeable vertical artifacts (red arrow), and VISION-XL struggles to reconstruct fine details (e.g., around the eye). In contrast, AVIS restores the most plausible details. Furthermore, despite its much higher throughput, AVIS \emph{Flash} avoids the artifacts seen in the baselines, remaining highly competitive.}
    \label{fig:qual_sptio_temporal}
\end{figure}

\begin{figure}[h]
    \centering
    \includegraphics[width=\linewidth]{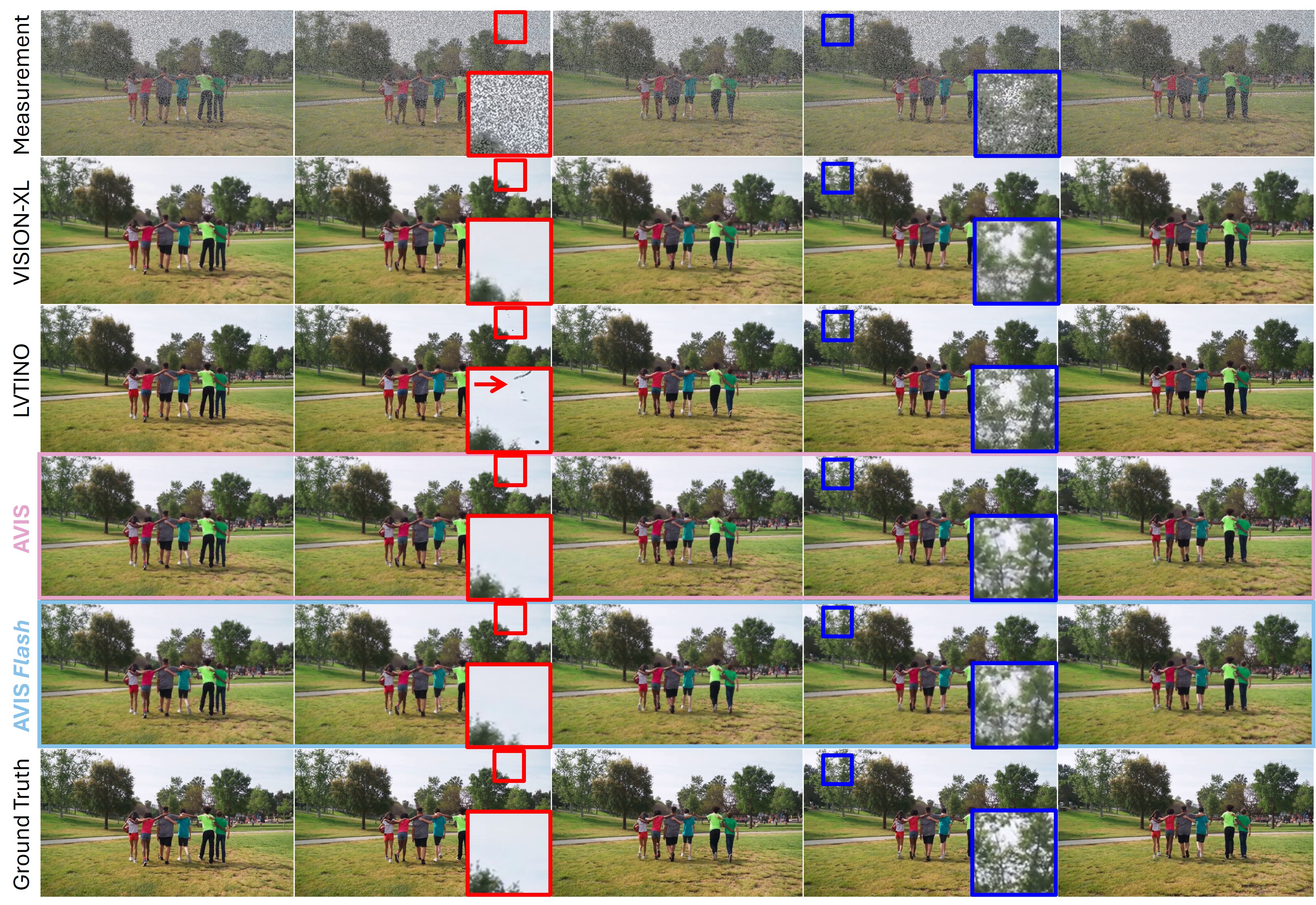}
    \vspace{-1.5em}
    \caption{\textbf{Qualitative comparisons on inpainting.} LVTINO introduces unnatural floating artifacts in the restored sky region (red arrow), and VISION-XL yields overly smoothed, blurry textures when reconstructing the trees (blue box). In contrast, AVIS and AVIS \emph{Flash} produce more plausible structures and preserve overall scene consistency.}
    \label{fig:qual_inpainting}
\end{figure}

\begin{figure}[h]
    \centering
    \vspace{-0.5em}
    \includegraphics[width=\linewidth]{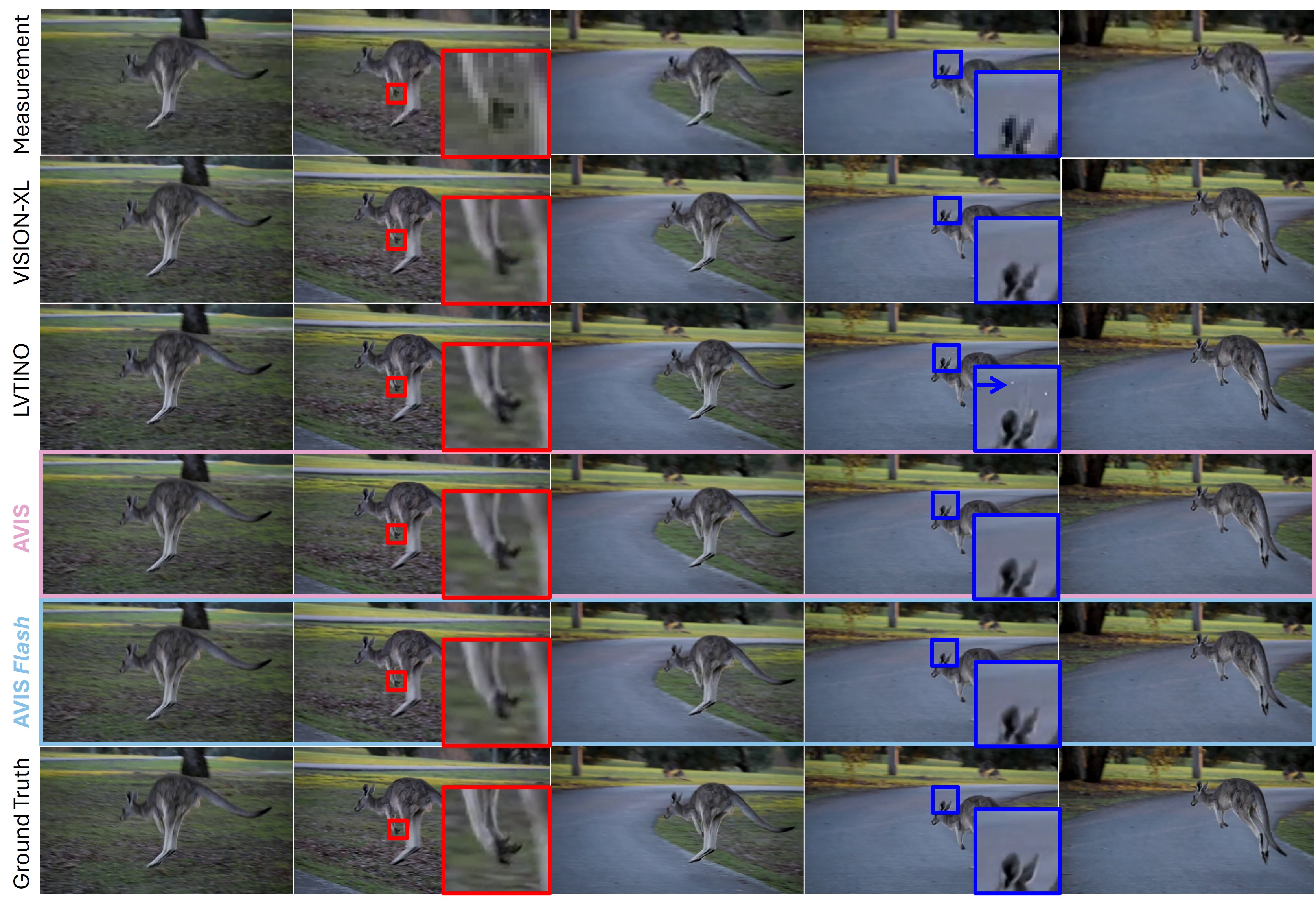}
    \vspace{-1.5em}
    \caption{\textbf{Qualitative comparisons on super-resolution.} LVTINO produces unnatural artifacts in the highlighted region (blue box). While VISION-XL and AVIS \emph{Flash} yield slightly softer results, AVIS restores finer details. Notably, even with its highly accelerated inference, AVIS \emph{Flash} successfully preserves structural integrity without severe artifacts.}
    \label{fig:qual_sr}
\end{figure}

\clearpage
\begin{figure}[h]
    \centering
    \includegraphics[width=\linewidth]{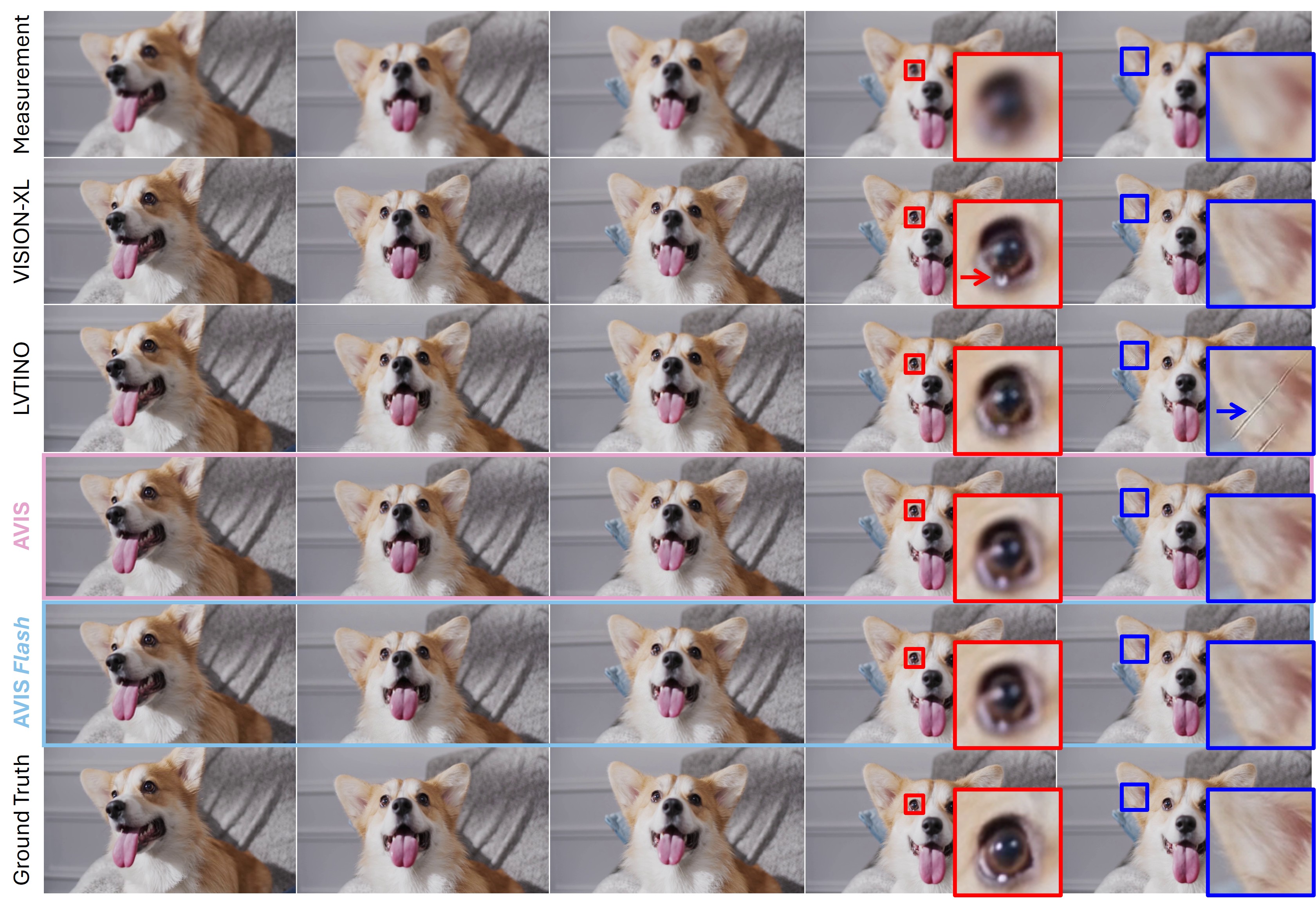}
    \vspace{-1.5em}
    \caption{\textbf{Qualitative comparisons on Gaussian deblurring.} VISION-XL introduces minor distortions (red box), and LVTINO creates unnatural artifacts (blue box). In contrast, our proposed AVIS and AVIS \emph{Flash} faithfully preserve fine details and structural integrity.}
    \label{fig:qual_gauss}
\end{figure}


\subsection{Backbone Fairness Check}
\label{appendix:matched_backbone}
To examine whether the gains of AVIS are primarily due to the video prior rather than the solving framework, we additionally evaluate the leading non-autoregressive solver, LVTINO~\cite{spagnoletti2026lvtino}, coupled with the same autoregressive backbone~\cite{huang2025self} as AVIS on the 4$\times$ super-resolution task.

As shown in Table~\ref{tab:appendix}, replacing the original bidirectional prior of LVTINO with the autoregressive backbone leads to mixed results: PSNR and FVD improve, whereas SSIM, LPIPS, FID, and most VBench metrics degrade. Under this matched-backbone setting, AVIS still achieves stronger overall performance than LVTINO (AR), suggesting that the gains of AVIS are not explained solely by the choice of video prior, but also by the proposed solving framework.

\begin{table*}[h]
\centering
\small
\resizebox{\textwidth}{!}{%
\begin{tabular}{l | ccccc | ccccc}
\toprule
Method & PSNR$\uparrow$ & SSIM$\uparrow$ & LPIPS$\downarrow$ & FVD$\downarrow$ & FID$\downarrow$ & Sub. Con.$\uparrow$ & Bg. Con.$\uparrow$ & M. Smooth.$\uparrow$ & Aesth.$\uparrow$ & Imag.$\uparrow$ \\
\midrule
LVTINO (AR) & \underline{30.28} & 0.818 & 0.113 & \underline{48.50} & \underline{21.99} & 96.13 & 95.75 & \underline{99.11} & 51.27 & \underline{55.84} \\
LVTINO & 30.04 & \underline{0.824} & \underline{0.102} & 59.42 & \textbf{18.32} & \textbf{96.36} & \underline{96.09} & \underline{99.11} & \textbf{52.63} & \textbf{59.05} \\

AVIS & \textbf{30.38} & \textbf{0.826} & \textbf{0.101} & \textbf{40.36} & 23.94 & \underline{96.30} & \textbf{96.21} & \textbf{99.24} & \underline{52.19} & 55.13 \\

\bottomrule
\end{tabular}
}
\vspace{-0.5em}
\caption{%
\textbf{Backbone fairness check on 4$\times$ super-resolution.}
We additionally evaluate LVTINO coupled with the same autoregressive backbone as AVIS to examine whether the observed gains are primarily due to the video prior.
While switching LVTINO to the AR backbone yields mixed results, AVIS still achieves stronger overall performance, suggesting that its gains cannot be explained solely by the choice of backbone. \textbf{Bold} and \underline{underline} denote the best and second-best results, respectively.
}
\label{tab:appendix}
\end{table*}



\end{document}